\newtheorem{theorem}{Theorem}
\newtheorem{proposition}[theorem]{Proposition}
\newtheorem{example}{Example}
\newtheorem{remark}{Remark}
\newcommand{\ie}{\mbox{i.e.}}
\newcommand{\eg}{\mbox{e.g.}}
\newcommand{\cf}{\mbox{cf.}}
\newcommand{\viz}{\mbox{viz.}}
\newcommand{\wrt}{\mbox{w.r.t.}}
\newcommand{\tuple}[1]{\langle #1 \rangle}
\newcommand{\Lang}{\ensuremath{\mathcal{L}}}
\newcommand{\KB}{\ensuremath{\mathcal{K}}} 
\newcommand{\D}{\ensuremath{\mathcal{D}}}
\newcommand{\calS}{\ensuremath{\mathcal{S}}}
\newcommand{\limp}{\rightarrow} 
\newcommand{\liff}{\leftrightarrow} 
\newcommand{\lxor}{\veebar}
\newcommand{\sat}{\Vdash}
\newcommand{\twiddle}{\hspace{-0.04cm}\mathrel|\joinrel\sim\hspace{-0.05cm}}
\newcommand{\calL}{\mathcal{L}}
\newcommand{\KBi}[1]{\ensuremath{\KB^{<\infty}}}
\newcommand{\KBiAB}[1]{\ensuremath{\KB^{A\twiddle\lnot B\mid <\infty }}}
\newcommand{\ii}[1]{\mbox{$(#1)$}}
\newcommand{\nd}{\noindent}
\newcommand{\defined}{\:\raisebox{1ex}{\scalebox{0.5}{\ensuremath{\mathrm{def}}}}\hskip-1.65ex\raisebox{-0.1ex}{\ensuremath{=}}\:}
\renewcommand{\vec}[1]{\mathbf{#1}}
\newcommand{\loggnn}{{\sc {MILP-SAT}-{GNN}}}
\newcommand{\realn}{\mathbb{R}}
\newcommand{\integern}{\mathbb{Z}}
\newcommand{\naturaln}{\mathbb{N}}
\newcommand{\satindicator}{ \mathbb{I}_{\mathtt{SAT}}}
\newcommand{\aggrf}{\mathtt{agg}}
\newcommand{\combf}{\mathtt{comb}}
\newcommand{\poolf}{\mathtt{pool}}
\newcommand{\multiset}[1]{\{\!\!\{ #1 \}\!\!\}}
\newcommand{\transp}[1]{{#1}^\intercal}
\newcommand{\calF}{\mathcal{F}}
\begin{document}
\title{\loggnn: Yet Another Neural SAT Solver} 
\author{%
Franco Alberto Cardillo$^1$ \and
Hamza Khyari$^2$ \and
Umberto Straccia$^3$ \\ \\
 $^1$CNR-ILC, Pisa, Italy \\
 $^2$USMBA-LSIA, Fez, Morocco \\
 $^3$CNR-ISTI, Pisa, Italy
}

\begin{abstract}
We proposes a novel method that enables Graph Neural Networks (GNNs) to solve SAT problems by leveraging a technique developed for applying GNNs to Mixed Integer Linear Programming (MILP). Specifically, k-CNF formulae are mapped into MILP problems, which are then encoded as weighted bipartite graphs and subsequently fed into a GNN for training and testing. From a theoretical perspective: (i) we establish permutation and equivalence invariance results, demonstrating that the method produces outputs that are stable under reordering of clauses and variables; (ii) we identify a theoretical limitation, showing that for a class of formulae called foldable formulae, standard GNNs cannot always distinguish satisfiable from unsatisfiable instances; (iii) we prove a universal approximation theorem, establishing that with Random Node Initialization (RNI), the method can approximate SAT solving to arbitrary precision on finite datasets—that is, the GNN becomes approximately sound and complete on such datasets. Furthermore, we show that for unfoldable formulae, the same approximation guarantee can be achieved without the need for RNI. Finally, we conduct an experimental evaluation of our approach, which show that, despite the simplicity of the neural architecture, the method achieves promising results. 

\end{abstract}

\maketitle


\section{Introduction}

The \emph{Boolean Satisfiability Problem} (SAT), and particularly its restricted variant 3-SAT (see definition later on), stands as a fundamental challenge in computer science, underpinning numerous applications across formal verification, artificial intelligence, theory of computation, and optimization (see, \eg~\cite{Gomes08}). While decades of research have yielded highly sophisticated SAT solvers based on conflict-driven clause learning, heuristics, and systematic search, these solvers often face  difficulties when tackling large-scale, structurally complex, or adversarially generated instances (\eg~designing heuristics remains a highly
non-trivial and time-consuming task).
In parallel, the field of machine learning, and in particular  \emph{Graph Neural Networks} (GNNs)~\cite{Merkwirth05,Scarselli09}, has demonstrated remarkable capacity to model and reason about structured, relational data, 
offering a promising opportunity to enhance or even replace modern SAT solvers. 
However, applying GNNs directly to SAT presents some challenges:
SAT formulae have rich, logical structure that is difficult to fully capture with standard GNN architectures. Also, GNNs, due to their inherent limitations may fail to distinguish between critical differences in satisfiability status. Furthermore, achieving soundness (if the solver's decision is SAT then the satisfiability status is indeed SAT) and completeness (if the satisfiability status is SAT then the solver's decision is SAT) is particularly difficult in learned settings.

{\bf Related Work.} The use of GNNs for SAT solving is not novel as documented by the GNN SAT solvers that have been proposed in the literature 
(see~\cite{Guo23,Holden21,Li24}, for an overview). 
Existing GNN-based SAT solvers can be broadly categorized into two
branches: \emph{stand-alone neural solvers} and \emph{neural-guided solvers}.
Stand-alone neural solvers utilize GNNs to solve SAT instances directly.
For instance, \cite{Selsam19,Cameron20,Shi23} focuses on predicting the satisfiability of a given formula, while several alternative approaches~\cite{Amizadeh19b,Ozolins22,Li23}
aim to construct a satisfying assignment.
Neural-guided solvers, on the other hand, integrate GNNs with modern SAT solvers, trying to improve \eg~their search heuristics  with the prediction of GNNs.  In this category fall methods such as~\cite{Li22,Selsam19b,Yan24,Yolcu19,Zhang20}.  Eventually, in~\cite{Li24} a benchmark dataset has been proposed to evaluate some of these GNN models.
%
{\bf Contribution.} Our contribution falls in the former stand-alone neural solvers  category and is as follows.  We propose a new method that enables GNNs to solve SAT problems by leveraging a technique developed for applying GNNs to Mixed Integer Linear Programming (MILP). Specifically, k-CNF formulae are mapped into MILP problems, which are then encoded as weighted bipartite graphs and subsequently fed into a GNN for training and testing. 
We believe that, having an underlying MILP representation, may allow us, at least in theory (with appropriate and more involved transformations), to deal with some \emph{many-valued}~\cite{Haehnle94a}, paraconsistent~\cite{Abe15}, probabilistic and/or logics combining logical constraints with numerical constraints as well~\cite{LukasiewiczT98,Martires24,deRaedt07}, opening up to new perspectives of GNNs usages.

From a theoretical perspective:
\ii{i} we establish permutation and equivalence invariance results, demonstrating that the method produces outputs that are stable under reordering of clauses and variables; 
\ii{ii} we prove a universal approximation theorem, establishing that with \emph{Random Node Initialization}  (RNI)~\cite{Abboud20,Grohe21,Sato21}, the method can approximate SAT solving to arbitrary precision on finite datasets—that is, the GNN becomes approximately sound and complete on such datasets. Furthermore, we show that for unfoldable formulae, the same approximation guarantee can be achieved without the need for RNI; and
\ii{iii} finally, we have implemented our method and have conducted an experimental evaluation, demonstrating
that, despite the simplicity of the neural architecture, the method achieves promising results.

In the following, we proceed as follows. In the next section, we introduce the background notions we will rely on. In Section~\ref{loggnn} we illustrate our method and theoretical results, while in Section~\ref{eval} we address the experimental validation. Section~\ref{concl} summarise our contribution and illustrates topics for future work.

\section{Background} \label{bckg}

\nd For readers not be familiar with both CNFs and GNNs, we provide here a succinct summary of the basic notions we rely on. The informed reader may just skip a part and come back if needed.

{\bf CNFs.} \label{pl}
%
\nd Let $\Sigma=\{p_1, p_2, \ldots, p_{n}\}$ be a finite non-empty set of $n$ propositional \emph{letters} (or \emph{variables}).
%
Let $\Lang_\Sigma = \{\alpha, \beta,\ldots\}$ be the set of propositional formulae based on the set of Boolean operators  
$\{\neg, \land, \lor \}$ and $\Sigma$ inductively defined as follows: \ii{i} every propositional letter is a formula; and \ii{ii} if $\alpha, \beta$ are formulae, so are $\neg \alpha, \alpha \land \beta$ and $\alpha \lor \beta$.
The Boolean operator $\neg$ (resp.~$\land, \lor,  \limp, \liff$ is called the \emph{negation} (resp.~\emph{conjunction, disjunction}) operator.
A \emph{literal} (denoted $l$) is either a propositional letter $p_i$ (called \emph{positive literal}) or its negation $\neg p_i$ (called \emph{negative literal}) and with $\bar{l}$ we denote $\neg p_i$ (resp.~$p_i$) if $l=p_i$ (resp.~if $l=\neg p_i$). 
%
%
A \emph{clause} is a (finite) disjunction of literals, while a \emph{$k$-clause} is a clause of $k$ literals ($1< k \leq n$). In a clause no letter may occur more than once. A formula is in \emph{Conjunctive Normal Form} (CNF) if it is a (finite) conjunction of clauses. It is  a $k$-CNF formula if it is a (finite) conjunction of $k$-clauses. All clauses in a CNF formula are assumed to be  distinct. 
A $k$-CNF formula $\phi$ of $m$ clauses 
$\phi  =  \bigwedge_{i=1}^m C_i$,  with  $C_i  =  \bigvee_{j=1}^{k} l_{ij}$
%
can also be represented as a \emph{set of sets},  by representing each $k$-clause as a set of literals and the conjunction of clauses as the set of clauses represented as sets. That is, 
$\phi  =  \{C_1, \ldots, C_m \}$,  with  $C_i  =  \{l_{i1}, \ldots , l_{ik} \}$.
%
%
%
%
%
Note that there are at most $k_{\mathtt{CNF}} =  2^k \binom{n}{k}$
distinct $k$-clauses and, thus, any  $k$-CNF formula can not have more than $k_{\mathtt{CNF}}$ $k$-clauses.
With $\calL^k_{\mathtt{CNF}}$ we denote the set of all $k$-CNF formulae, while with $\calL^k_{\mathtt{CNF}}(n,m)$ we denote $k$-CNF formulae having at most $n$ variables and $m$ clauses.
%
%
%
%
%
%
%
Given a formula $\phi$, with $\Sigma_\phi \subseteq \Sigma$ we denote the set of propositional letters occurring in $\phi$. 
%
%
An \emph{interpretation}, or \emph{world},  $w$ \wrt~$\Sigma$ is a set of literals such that all propositional letters in $\Sigma$ occur exactly once in $w$.
%
%
We may also denote a world $w$ as the concatenation of the literals occurring in $w$ by replacing a negative literal $\neg p$ with $\bar{p}$ (\eg~$\{\neg p_1, p_2\}$ may be denoted also as $\bar{p}_1p_2$). 
With $w\sat \phi$ we indicate that the world $w$ \emph{satisfies} the formula $\phi$, \ie~$w$ is a \emph{model} of $\phi$, which is inductively defined as  usual:
 $w\sat  l$ iff $l \in w$, $w \sat \neg \phi$ iff 
 $w \not \sat \phi$, $w \sat  \phi \land \psi$ iff $w \sat  \phi$ and $w \sat  \psi$ and $w \sat  \phi \lor \psi$ iff $w \sat  \phi$ or $w \sat  \psi$. 
%
Furthermore, $\phi$ is \emph{satisfiable} (resp.~\emph{unsatisfiable}) if it has (resp.~has no) model. The \emph{$k$-SAT} problem consists in deciding whether a $k$-CNF formula is satisfiable or not and is known to be a NP-complete problem for $k\geq 3$~\cite{Cook71}.\footnote{The $k$-SAT problem can be solved by a nondeterministic Turing machine in polynomial time.} 
%
%
%
A \emph{dataset} is a finite set of formulae-label pairs ($s \geq 2$)
$\D = \{\tuple{\phi_1,y_1}, \ldots, \tuple{\phi_s, y_s}\}$,
such that $y_i \in\{0,1\}$ and $y_i = 1$ iff $\phi$ satisfiable.\footnote{Ideally, we would like to have that there is no pair involving equivalent formulae, \ie~$\phi_i \not \equiv \phi_j$ for each $1 \leq   i  < j \leq s$ and, thus, $|\D| \leq 2^{2^{n}}$. However, for large size dataset, \eg~in the order of $10^5$, producing such a dataset is unfeasible (time consuming) and, we are not going to consider this further constraint.}
A \emph{CNF} (resp.~$k$-CNF) \emph{dataset} is a dataset of CNF (resp.~$k$-CNF) formulae.
%
%

{\bf Graph Neural Networks.} \label{gnn}
%
%
\emph{Graph Neural Networks}~\cite{Merkwirth05,Scarselli09} (GNNs) are deep learning architectures for graph structured data (see, \eg~\cite{Hamilton20,Ju24}) for a recent overview). In general terms, each data point consists of the structure of a graph represented by the adjacency  matrix and properties of the graphs. The properties of these graphs are stored in node feature vectors, and can sometimes include edge feature vectors. 



Formally, a \emph{graph} is a tuple $G= \tuple{V,E,\vec{X}}$, where $V = \{v_1, \ldots, v_{|V|}\}$ is the set of nodes, $E = \{e_1, \ldots, e_{|E|}\}$ is the set of edges, and  the edge $e = \tuple{v_i, v_j} \in E$ represents the connection relationship between nodes $v_i$ and $v_j$ in the graph. $\vec{X} \in \realn^{|V|\times M}$ is the node feature matrix with $M$ being the dimension of each node feature. With $\vec{x}_v$ we denote the feature vector of $v$.\footnote{Edges may also have an associated feature vector and being embedded, but for ease of representation we do not consider this case here.} The adjacency matrix of a graph can be defined as $\vec{A} \in \realn^{|V| \times |V|}$, where $\vec{A}_{ij} = 1$ if $\tuple{v_i, v_j} \in E$, otherwise $\vec{A}_{ij} = 0$. We use $N(v)$ to denote the \emph{neighbours} of a node $v$ and with $d_v$ its degree. The graph $G$ is \emph{undirected} if $\vec{A}$ is \emph{symmetric}, \ie~$\vec{A}_{ij} = \vec{A}_{ji}$ for all $1\leq i, j \leq n$. $G$ has a \emph{self-loop} if $\vec{A}_{ii} = 1$ for some $1\leq i\leq n$. The \emph{order} of a graph is the number of nodes in it.


The adjacency matrix $\vec{A}$ of a graph $G$ is often highly sparse, making it inefficient for storage, computation, and machine learning tasks. To address this, \emph{graph embedding} techniques—specifically \emph{node embeddings}—have been developed~\cite{Ju24} to reduce dimensionality by learning a compact vector representation $\vec{h}_v \in \mathbb{R}^d$ for each node $v \in V$, where $d \ll |V|$.

Many different versions of the GNNs have been considered in the literature (see, \eg~\cite{Ju24}).
A somewhat standard and  core model is the one that goes under the name \emph{Message Passing Neural Network} (MPNN)~\cite{Gilmer17} or \emph{Aggregate-Combine Graph Neural Network} (AC-GNN)~\cite{Barcelo20}. AC-GNN are also called \emph{1-dimensional GNNs} in~\cite{Morris19}.

It consists of a sequence of layers that iteratively combine the representation vectors of every node with the multiset of representation vectors of its neighbours.  
Formally, consider a sequence $\tuple{L_1, ..., L_l}$ of layers $L_k$. $L_1$ is called the \emph{input layer}, $L_2, ..., L_{l-1}$ the \emph{hidden layers} and $L_l$ the \emph{output layer}. Let $\aggrf$ and $\combf$ be  \emph{aggregation} and \emph{combination} functions. 
The function $\aggrf$  aggregates the information from the neighbours of each node, while  $\combf$  updates the node representations by combining the aggregated information from neighbours with the current node representations. 
We assume that each node $v$ has an associated initial embedding vector 
$\vec{h}_v^1 = f_{\mathtt{in}}(\vec{x}_v)$, where $f_{\mathtt{in}}$ is a learnable function. 
A GNN computes a vector $\vec{h}_v^k$, called the node representation or \emph{node embedding} of $v$  at the $k$-th layer,  via the following recursive formula ($1\leq k < l$):\footnote{$\multiset{\cdot }$ denotes a multiset.}
\begin{eqnarray}\label{gnnform}
    \vec{h}_v^{k+1} & = & \combf(\vec{h}_v^k, \aggrf(\multiset{\vec{h}_w^k \mid w \in N(v) })
\end{eqnarray}

\nd The node representations $h_v^l$ in the last layer can be treated as the final node representations.
Then, the vectors $h_v^l$ of the last layer $L_l$ are then \emph{pooled} together via

\begin{eqnarray}\label{gnnpool}
\vec{\hat{y}} &= & \poolf(\multiset{ \vec{h}_v^l \mid v \in V})
\end{eqnarray}

\nd to give a single graph vector $\vec{\hat{y}}$, the output of the GNN.\footnote{The $\aggrf$, $\combf$ and $\poolf$ functions may also take into account the feature vectors of the involved nodes, but for ease of presentation we do not consider it here.}

In order to translate the abstract GNN framework defined above  into something we can implement, we must give concrete instantiations to these update and aggregate functions. The most basic GNN framework, which is a simplification of the original
GNN models proposed in~\cite{Merkwirth05,Scarselli09}, is as follows ($1 \leq k < l$):

\begin{equation} \label{gnnbasic}
\begin{array}{rcl}
    \vec{h}_v^1 & = & f_{\mathtt{in}}(\vec{x}_v) \\
    \vec{h}_v^{k+1} & = & \sigma ( \vec{W}^k_{self} \vec{h}_v^k  + \vec{W}^k_{neigh} \sum_{w\in N(v)} \vec{h}_w^k +   \vec{b}_v^k) \\
    \vec{\hat{y}}_v &= & f_{\mathtt{out}}(\vec{h}_v^{l}) \ , \text{ for all } v\in V \ ,
\end{array}
\end{equation}

\nd where $\vec{W}^k_{self}, \vec{W}^k_{neigh} \in \realn^{d \times d}$ are learnable weight matrices,
$f_{\mathtt{out}}$ is a learnable function and $\sigma$ is a non-linear \emph{activation function}. Commonly used activation functions include the sigmoid, tanh, and ReLU functions.\footnote{\url{https://en.wikipedia.org/wiki/Activation_function}.}
The bias term $\vec{b}_v^k \in \realn^{d}$ is often omitted for notional simplicity, but including the bias term can be important to improve effectiveness. 
Many more involved GNNs model haven been developed so far.
We refer the reader to \eg~\cite{Hamilton20} for more insights on GNNs.

In the context of GNNs, two tasks are typically addressed. Informally, the \emph{node classification problem} is the task of predicting the label $y_v$  of each node $v$ in a graph,  while the \emph{graph classification problem}  is the task of predicting the label $y_G$ of a graph $G$, where a label may be a categorical class (binary classification or multiclass classification), or  a continuous number (regression). We anticipate that, in this work, we are considering the binary graph classification problem only in which the output of a GNN is always a value in $\{0,1\}$.\footnote{The output $1$ will denote that a formula is satisfiable, while $0$ will denote that the formula is unsatifiable.}
Specifically, a \emph{graph dataset} consists of  set of graph-label pairs ($s \geq 1$)
$\D = \{\tuple{G_1,y_1}, \ldots, \tuple{G_s, y_s}\}$,
%
where $y_i \in\{0,1\}$. From such a dataset of labelled graphs (the \emph{training set}), $\tuple{G_i,y_i}$, with $y_i \in \{0,1\}$, we would like to induce (learn) a function $f_\mathtt{GNN}:\mathcal{G} \to \{0,1\}$ that given an unlabelled graph $G\in \mathcal{G}$ as input predicts its label $y_G  \in \{0,1\}$. To do so, in general one tries learn the parameters of the network, \eg~the weights and bias in Eq.~\ref{gnnbasic}, by minimising a \emph{loss function}. That is, to learn a function $f_\mathtt{GNN}$ from $\D$ such that it minimises the \emph{loss}
$L  =   \sum_{G_i \in \D} loss(\hat{y}_i, y_i)$,
%
where $\hat{y}_i$ is the output of $f_\mathtt{GNN}$ applied to $G_i$ and $y_i$ is the so-called \emph{the ground truth label} of $G_i$ and $loss$ quantifies the disagreement between $\hat{y}$ and $y$ (with $loss(\hat{y}, y) = 0$  if $\hat{y}= y$). 





\section{MILP-based GNNs for $k$-SAT} \label{loggnn}

\nd As anticipated, we leverage on the recently introduced method to learn GNNs that solve \emph{Mixed Integer Linear Programming} (MILP) problems~\cite{Chen23}\footnote{We refer the reader \eg~to~\cite{Zhang23} for an overview on methods using machine learning in the context of  MILP.} to decide whether a $k$-CNF formula is satisfiable or not. Specifically, k-CNF formulae are mapped into MILP problems, which are then encoded as weighted bipartite graphs and subsequently fed into a GNN for training and testing.

\subsection{From $k$-CNFs to MILPs} \label{sekcnfcmilp}

\nd We recall that a general MILP problem $M$ is defined as follows:

\begin{equation} \label{milp}
\begin{array}{l}
    \min \transp{\vec{c}} \vec{x} \text{ s.t. } 
    \vec{A} \vec{x} \geq \vec{b} \ , 
    \vec{l} \leq \vec{x} \leq \vec{u} \ , 
    x_j \in \integern, \forall j \in I \ , 
\end{array}
\end{equation}

\nd where
$\transp{\vec{c}} \vec{x} = \sum_{i=1}^n c_i\cdot x_i$ is the \emph{objective function} to be minimised;
$\vec{x} \in \realn^n$ are the \emph{variables};
$\vec{c} \in \realn^n$ are the \emph{coefficients} of the variables in the objective function;
$\vec{A} \in \realn^{m \times n}$ is a matrix of coefficients such that $\vec{A} \vec{x}$ is the set of linear \emph{constraints} bounded below by $\vec{b} \in \realn^m$, which is a vector of coefficients;
$\vec{l}, \vec{u} \in (\realn \cup \{-\infty, +\infty\})^n$ express lower and upper bounds of the variables; and
the index set $I \subseteq \{1, 2, \ldots, n\}$ denotes those indices $j$ where the  $x_j$ are constrained to be an integer.


\nd The \emph{feasible set} of a MILP $M$ is defined as 
$M_{\mathtt{feasible}} =  \{\vec{x} \in \realn^n \mid \vec{A} \vec{x} \geq \vec{b}, \vec{l} \leq \vec{x} \leq \vec{u},  x_j \in \integern, \forall j \in I   \}$.
%

We say that $M$ is \emph{feasible} if $M_{\mathtt{feasible}} \neq \emptyset$, and \emph{unfeasible} otherwise.
It is well-known that the feasibility problem, like the $k$-SAT problem, is a NP-complete problem as well.
In this work we are only interested in the feasibility/unfeasibility problem and, thus, the minimisation of the objective function is omitted from a MILP. We also recall that the \emph{MPS} file format\footnote{\url{https://lpsolve.sourceforge.net/5.5/mps-format.htm}.} is a standard way of representing and archiving MILP problems.

Now, consider a $k$-CNF formula $\phi$ represented as set of sets. 
We transform now the $k$-SAT problem of $\phi$ into a feasibility problem of a MILP $M_\phi$. 
The transformation follows a well-known method in transforming propositional formulae into a set of linear equations (see, \eg~\cite{Haehnle94a,Jeroslow89}). Specifically, for each propositional letter $p_i$ occurring in $\phi$, consider a binary integer variable $x_i$. The intuition here is that $x_i = 1$ (resp.~$x_i=0)$ means that $p_i$ is true (resp.~$p_i$ is false).
For a literal $l$, we use the following transformation

\begin{eqnarray}
    \tau(l) & = & \begin{cases}
                        x_i & \text{ if } l=p_i \\
                        1-x_i & \text{ if } l = \neg p_i \ . 
                \end{cases}
\end{eqnarray}

\nd A clause $C_i  =  \{l_{i1}, \ldots , l_{ik} \}$ is transformed into the linear formula

\begin{equation} \label{tauC}
  \tau(C_i) =   \sum_{j=1}^k \tau(l_{ij}) \ .
\end{equation}

\nd Eventually, the formula $\phi$ is translated into the MILP $M_\phi$:

\begin{equation}\label{milpphi} 
\begin{array}{l}
    \tau(C_1)  \geq  1 \ , \ldots,  \tau(C_m) \geq  1 \ ,
    x_i \in \{0,1\}, \forall p_i \in \Sigma_\phi \ ,
\end{array}
\end{equation}


\nd which we may succinctly represent as the MILP $M_\phi$ ($n=|\Sigma_\phi|$)

\begin{equation} \label{milpphiB}
\begin{array}{l}
    \vec{A}_\phi \vec{x}_\phi \geq \vec{b}_\phi \ , 
    \vec{x}_\phi \in \{0,1\}^n \ ,
\end{array}
\end{equation}


\nd where $\vec{x}_\phi \in \{0,1\}^n$ is the vector of variables related to the propositional letters occurring in $\phi$, $\vec{A}_\phi \in \{-1,0,1\}^{m \times n}$ and $\vec{b}_\phi \in \{1-k, \ldots, -1,  0, 1\}^m$ are respectively, the matrix and vector of integers resulting from the encoding of the linear expressions as from Eq.~\ref{tauC}. That is, for $1\leq i \leq m$,
$b_i  =  1 - neg_i$,
%
where $neg_i$ is the number of negative literals in $C_i$ and, thus, $0\leq neg_i \leq k$.

The following is an easy provable result:
\begin{proposition} \label{pphi}
    Consider a $k$-CNF formula $\phi$ and let $M_\phi$ its transformation $M_\phi$ into MILP according to Eq.~\ref{milpphiB}. Then $\phi$ is satisfiable iff  $M_\phi$ is feasible.
\end{proposition}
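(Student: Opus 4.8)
The plan is to exhibit an explicit, bijective correspondence between interpretations of $\phi$ and the $0$–$1$ vectors in $\{0,1\}^n$ (where $n=|\Sigma_\phi|$), and to show that under this correspondence the models of $\phi$ are exactly the feasible points of $M_\phi$; the biconditional then follows by quantifying over both sides.

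First I would fix the bijection. For a world $w$ over $\Sigma_\phi$, let $\vec{x}^w\in\{0,1\}^n$ be given by $x^w_i=1$ if $p_i\in w$ and $x^w_i=0$ if $\neg p_i\in w$; conversely every $\vec{x}\in\{0,1\}^n$ comes from the unique world $w_{\vec{x}}=\{p_i : x_i=1\}\cup\{\neg p_i : x_i=0\}$. This is clearly a bijection between the worlds over $\Sigma_\phi$ and $\{0,1\}^n$, and by construction every $\vec{x}^w$ trivially meets the integrality/box constraints $\vec{x}\in\{0,1\}^n$ of Eq.~\ref{milpphiB}.

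Next I would prove the key local fact: for every literal $l$ of $\phi$ and every world $w$, the linear term $\tau(l)$ evaluated at $\vec{x}^w$ equals $1$ if $w\sat l$ and $0$ otherwise. This is an immediate case split on the definition of $\tau$: if $l=p_i$ then $\tau(l)=x^w_i$, which is $1$ iff $p_i\in w$ iff $w\sat l$; if $l=\neg p_i$ then $\tau(l)=1-x^w_i$, which is $1$ iff $p_i\notin w$, i.e.\ $\neg p_i\in w$, iff $w\sat l$. Summing over a clause $C_i=\{l_{i1},\dots,l_{ik}\}$, and using that within a clause no propositional letter repeats, $\tau(C_i)=\sum_j\tau(l_{ij})$ evaluated at $\vec{x}^w$ is exactly the number of literals of $C_i$ satisfied by $w$; hence the constraint $\tau(C_i)\ge 1$ holds at $\vec{x}^w$ iff at least one literal of $C_i$ is satisfied by $w$, i.e.\ iff $w\sat C_i$. (Distinctness of letters is not strictly needed — $p_i\lor\neg p_i$ gives the constant $1$, consistent with it being a tautology — but it keeps the counting transparent.)

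Finally I would assemble the chain of equivalences: $w\sat\phi$ iff $w\sat C_i$ for all $i=1,\dots,m$ iff $\vec{x}^w$ satisfies $\tau(C_i)\ge 1$ for all $i$ iff $\vec{x}^w$ lies in the feasible set of $M_\phi$ (the $\{0,1\}^n$ constraint being automatic for any $\vec{x}^w$). Quantifying over worlds and using that $w\mapsto\vec{x}^w$ is onto $\{0,1\}^n$, which contains the feasible set of $M_\phi$: $\phi$ is satisfiable iff some world models $\phi$ iff some $\vec{x}^w$ is feasible iff the feasible set of $M_\phi$ is nonempty, i.e.\ $M_\phi$ is feasible. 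There is no substantive obstacle here; the only point requiring care is the bookkeeping around the bijection — in particular confirming that the integrality/box constraints of $M_\phi$ confine its feasible set to the image of $w\mapsto\vec{x}^w$, so that nothing is lost when translating between models of $\phi$ and feasible points of $M_\phi$.
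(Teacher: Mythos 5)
Your proof is correct and is exactly the standard argument the paper has in mind: it leaves Proposition~\ref{pphi} unproved as "easily provable," relying on the well-known propositional-to-MILP translation of H\"ahnle and Jeroslow, which is precisely the world-to-$\{0,1\}^n$ bijection plus the literal/clause evaluation correspondence you spell out. No gaps; your careful note that the box/integrality constraints confine the feasible set to the image of the bijection is the only point of bookkeeping, and you handle it.
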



\begin{example} \label{expropmilp}
    Consider the 2-CNF (Krom) formula $\phi = (p_1 \lor p_2) \land (p_1 \lor \neg p_2) \land (\neg p_1 \lor p_2)$, which is equivalent to  $p \land q$, \ie~$\phi \equiv p_1 \land p_2$. $\phi$'s set of set representation is
$\phi  =  \{ \{p_1, p_2 \}, \{p_1 , \neg p_2 \}, \{\neg p_1 , p_2 \} \}$.
%
%
It can easily be verified that the MILP transformation $M_\phi$, consisting of two variables and three constraints is
$x_1 + x_2  \geq  1, x_1 - x_2  \geq  0, - x_1 + x_2  \geq  0, x_1, x_2 \in \{0,1\}$
%
%
\nd and, thus, we have

\begin{equation}\label{milpphiexmatrix} 
\vec{A}_\phi  = 
    \begin{bmatrix}
        1 & 1 \\
        1 & -1 \\
        -1 & 1 
    \end{bmatrix} \ , \
\vec{b}_\phi  = 
    \begin{bmatrix}
        1  \\
        0  \\
        0  
    \end{bmatrix}     \ .
\end{equation}    

\end{example}

\subsection{From MILPs to GNNs} \label{secmilpgnn}

\nd Now, we adapt the GNN method to solve MILP problems proposed in~\cite{Chen23,Gasse19},
to our specific case. It is based on a MILP representation in terms of weighted bipartite graphs, which is a quite typical approach when using MILP in connection with GNNs (see~\cite{Zhang23} for an overview).
Specifically, a MILP problem $M$ is represented as an undirected \emph{Weighted bipartite graph} $G = (V \cup W,E, \vec{X})$ with nodes in $V \cup W$ partitioned into two groups $V = \{v_1, \ldots, v_m\}$ and $W = \{w_1, \ldots, w_n\}$, with $V \cap W = \emptyset$, and $E$ is a set of weighted edges where each edge has one end in $V$ and one end in $W$. $E$ can also be seen as a function $E: V\times W \to \realn$.

Consider a finite dataset $\D$ of formulae in $\calL^k_{\mathtt{CNF}}(n,m)$ for fixed $n,m \in \naturaln$ (\ie~$k$-CNF formulae of maximal $n$ variables and $m$ clauses). Consider $\phi \in \D$ and its MILP transformation $M_\phi$ (Eq.~\ref{milpphiB}), its weighted bipartite representation  $G_\phi  = (V_\phi \cup W_\phi, E_\phi, \vec{X}_\phi)$, is as follows:
\begin{enumerate}
    \item $W_\phi$ is a set of nodes $w_j$ representing variable $x_j$ occurring in $M_\phi$ (\ie~propositional letter $p_j$ occurs in $\phi$). We call $w_j$ a \emph{variable node};
    
    \item $V_\phi$ is a set of nodes $v_i$ representing the $i$-th constraint occurring in $M_\phi$. We call $v_i$ a \emph{constraint node};

    \item the  edge connecting node constraint $v_i$ and variable node $w_j$ has weight $E_{ij} = (\vec{A_\phi})_{ij}$. Essentially, there is an edge $\tuple{v_i,w_j}$ if the variable $x_j$ occurs in the $i$-th constraint (\ie~$(\vec{A_\phi})_{ij}\neq 0$) and the weight of this edge is the value $(\vec{A_\phi})_{ij}$;

    \item the feature vector $\vec{x}_{v_i}$ of constraint node $v_i$ is $\vec{b}_i$ (so, has dimension $1$);

    \item the feature vector $\vec{x}_{w_j}$ of variable node $w_j$ is $\tuple{}$ (so, has dimension $0$).
    
\end{enumerate}
\nd We call the graph $G_\phi$ so constructed a \emph{MILP-graph}. Note that the degree of $G_\phi$ is at most $n+m$.

\begin{example}[Example~\ref{expropmilpB} cont.] \label{expropmilpB}
The bipartite graph representation of the MILP problem in Example~~\ref{expropmilp} is shown in Fig.~\ref{figexpropmilp}. It has two variable nodes $w_1,w_2$ and three constraint nodes $v_1, v_2, v_3$. There is an edge $\tuple{v_i,w_j}$ if $(\vec{A_\phi})_{ij}\neq 0$  and the weight of this edge is $(\vec{A_\phi})_{ij}$. Each constraint node $v_i$ has a feature of dimension 1, which is $\tuple{b_i}$.

\begin{figure}
\centering
\includegraphics[scale=0.35]{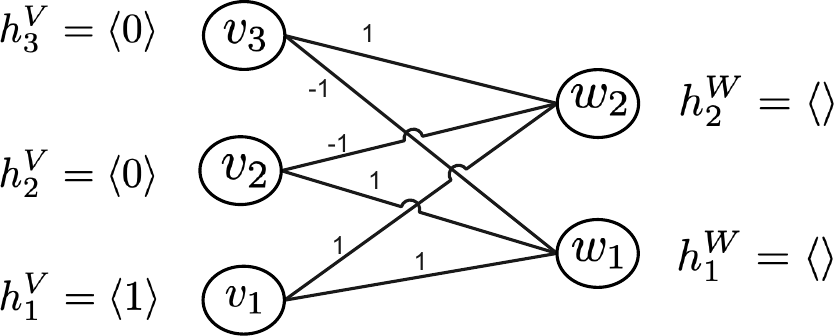}\\
\vspace*{2ex}
\caption{Bipartite graph representation of the MILP in Example~\ref{expropmilp}. }\label{figexpropmilp}
\vspace*{2ex}
\end{figure}
\end{example}

\nd Now, the GNNs in this paper always take an MILP-Graph as input, having at most $n$ variable nodes and at most $m$ constraint nodes, and the output  is a single real number used to predict the property of the whole graph (\cf~the case of graph classification), \ie~whether the underlying MILP problem (resp.~$k$-CNF formula) is feasible (resp.~satisfiable).

\begin{remark} \label{remnodepred}
Another case would be that each node in $W$ has an output and, thus, the GNN has an output in $\realn^n$, which is used to predict the property of each variable (\cf~the case of node classification). This may used in case we would like to predict also the model of a $k$-CNF formula, \ie~the truth of the variables in that model (if it exists). We will leave this task for future work.
\end{remark}

\nd Now we define the GNN structure precisely, which consist of the following steps.
For a sequence $\tuple{L_1, ..., L_l}$ of layers $L_k$:
%
\begin{description}
    \item[Input layer:]  the initial mapping at the input layer $L_1$ ($k = 1$) consists of an encoding of input feature $\vec{x}^V_i$ (resp.~$\vec{x}^W_j$)  of the $i$-th constraint node (resp.~$j$-variable node)  into the embedding space by means of learnable function $f^V_\mathtt{in}$ (resp.~$f^W_\mathtt{in}$), where $1 \leq i \leq m, 1 \leq j \leq n$:
%

\begin{equation*}
    \vec{h}_i^{1,V}  =  f^V_\mathtt{in}(\vec{x}^V_i) \ , 
    \vec{h}_j^{1,W}  =  f^W_\mathtt{in}(\vec{x}^W_j)  \ , 
\end{equation*}

\nd where $\vec{h}_i^{1,V}, \vec{h}_j^{1,W} \in \realn^d$ are initial embedded constraint and variable nodes features and $d$ is their dimension;

\item[Hidden layer:] the hidden layers $L_k$ ($1 < k < l$), are updated via learnable functions $f^V_k,  f^W_k \colon \realn^d  \times \realn^d \to \realn^d$ and $g^V_k,  g^W_k \colon  \realn^d \times \realn^d  \to \realn^d$ as follows $(1 \leq i \leq m, 1 \leq j \leq n)$ :

\begin{eqnarray}
     \vec{h}_i^{k+1,V} & = & g^V_{k+1}(\vec{h}_i^{k,V},  \sum_{j=1}^n E_{ij} f^W_{k+1}( \vec{h}_i^{k,W} ) ) \label{hiddmilpA}\\
     \vec{h}_j^{k+1,W} & = & g^W_{k+1}( \vec{h}_i^{k,W}, \sum_{i=1}^m E_{ij} f^V_{k+1}( \vec{h}_j^{k,V} ) )   \label{hiddmilpB} \ .
\end{eqnarray}

\item[Output layer:] for the (single) graph-level output, we use a learnable function $f_\mathtt{out}  \colon \realn^d  \times \realn^d \to \realn$ that maps the variable node embedding space of the last layer $L_l$ into a value  $\hat{y}_G \in \realn$ as follows:

\begin{eqnarray}
     \hat{y}_G & = & f_\mathtt{out}(\sum_{i=1}^m \vec{h}_i^{l,V},  \sum_{j=1}^n \vec{h}_j^{l,W}) \ . \label{gnnout}
\end{eqnarray}

\end{description}
\nd All functions in the so defined GNN, namely 
$f^V_\mathtt{in}$,
$f^W_\mathtt{in}$,
$f_\mathtt{out}$,
$\{f^V_k,  f^W_k,  =g^V_k,  g^W_k\}_{k=2}^{l-1}$
are learnable functions. Moreover, please note that, in our the current implementation, there is no update of edge features, which remain constant to their initial weight. 
%
%


We anticipate that we will classify a formula $\phi$ as satisfiable iff $\hat{y}_{G_\phi} > 0.5$, that is the output of the learned GNN applied to the bipartite input graph $G_\phi$ is greater than $0.5$.
We call the so obtained model \emph{\loggnn}. We also define $\calF_{\mathtt{GNN}}^{n,m}$ as the set of all  mappings 
$\calF_{\mathtt{GNN}}^{n,m}  =  \{F\colon \calL^k_{\mathtt{CNF}}(n,m) \to \realn \mid  F(\phi) = \hat{y}_{G_\phi}, \forall \phi \in  \calL^k_{\mathtt{CNF}}(n,m) \}$,
that is the set of mappings $F$ that 
take as input a $k$-CNF formula $\phi \in \calL^k_{\mathtt{CNF}}(n,m)$ (as set of sets);
transform $\phi$ into the MILP problem $M_\phi$;
represent $M_\phi$ as weighted bipartite graph $G_\phi$;
give $G_\phi$ as input to the GNN; and
has $\hat{y}_{G_\phi} \in \realn$ as output.


\nd Note that $G_\phi$'s order is at most $n+m$.

\subsection{Some Properties} \label{somep}

\nd In what follows, let $\Phi_{\mathtt{sat}}^{n,m}$ be the classification function
$\Phi_{\mathtt{sat}}^{n,m} \colon \calL^k_{\mathtt{CNF}}(n,m) \to \{0,1\}$,
%
%
where $\Phi_{\mathtt{sat}}^{n,m}(\phi) =1$ if the $k$-CNF formula $\phi \in  \calL^k_{\mathtt{CNF}}(n,m)$ is satisfiable, $\Phi_{\mathtt{sat}}^{n,m}(\phi) = 0$, otherwise.
We are going now to address the following question:
%
%
does there exist $F\in \calF_{\mathtt{GNN}}^{n,m}$ that well approximates  $\Phi_{\mathtt{sat}}^{n,m}$?
Proposition~\ref{unisatprop} later on will provide an answer to this question. But, beforehand we address some properties.

{\bf Invariance.}\label{sectinvariance}
Consider $\phi \in \calL^k_{\mathtt{CNF}}(n,m)$. Let
$\Sigma_\phi = \{p_1, ..., p_n\}$ be the alphabet of propositional letters occurring in $\phi$,
define $S_m = \{1, \ldots, m\}$ and  $S_n = \{1, \ldots, n\}$.
Let $\calS_m$ be the set of permutations $\tau^c$ on $S_m $ and 
let $\calS_n$ be the set of permutations $\tau^p$ on $S_n$.\footnote{A permutation of a set $S \subseteq \naturaln$ is defined as a bijection from $S$ to itself. That is, 
$\tau \colon S \to S$ is equivalent to the rearrangement of the elements of $S$ in which each element $i$
is replaced by the corresponding $\tau(i)$.}
Essentially, a permutation $\tau^c$ is a rearrangement of the clauses in $\phi$, \ie~each clause $C_i$ is replaced with $C_{\tau^c(i)}$ ($1\leq i \leq m$), while  $\tau^p$ is a rearrangement of the propositional letters occurring in $\phi$, \ie~each propositional letter $p_j$ occurring in $\phi$ is replaced with $p_{\tau^p(j)}$ ($1\leq i \leq n$). With $\tau^c(\phi)$ (resp.~$\tau^p(\phi)$) we denote the $k$-CNF formula obtained from $\phi$ by applying $\tau^c$ (resp.~$\tau^p$) to $\phi$ in the obvious way. Moreover,
for a world (interpretation) $w$ over $\Sigma_\phi$, we denote with $\tau^p(w)$ the interpretation obtained from
$w$ by applying $\tau^p$ to $w$ (\ie, each propositional letter $p_j$ occurring in $w$ is replaced with $p_{\tau^p(j)}$). If $W \subseteq W_\Sigma$ is a set of worlds, then $\tau^p(W) = \{\tau^p(w) \mid w \in W \}$.

Now, define $\calS^{m \times n} = \calS_m \times \calS_n$ and consider
$\vec{\tau} = \tuple{\tau^c , \tau^p} \in  \calS^{m \times n}$.
With $\vec{\tau}(\phi)$ we denote the $k$-CNF formula $\psi$ obtained from $\phi$ by applying $\tau^c$ and $\tau^p$ to $\phi$. 
Obviously, $\vec{\tau}(\phi) \in \calL^k_{\mathtt{CNF}}(n,m)$.
The following proposition can be shown.

\begin{proposition} \label{propmodelinvariance}
Consider $\phi \in \calL^k_{\mathtt{CNF}}(n,m)$ and let $\vec{\tau} = \tuple{\tau^c , \tau^p} \in  \calS^{m \times n}$. Then the following hold: 
\ii{i} $w$ is a model of $\phi$ iff $w$ is a model of $\tau^c(\phi)$;
\ii{ii} $w$ is a model of $\phi$ iff $\tau^p(w)$ is a model of $\tau^p(\phi)$;
\ii{iii} $w$ is a model of $\phi$ iff $\tau^p(w)$ is a model of $\vec{\tau}(\phi)$; and
\ii{iv} $\phi$ is satisfiable iff $\vec{\tau}(\phi)$ is satisfiable.
\end{proposition}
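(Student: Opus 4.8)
The plan is to prove the four items essentially in the order stated, since each is either elementary or builds on the previous one. The underlying intuition is simple: permuting clauses does not change the set of constraints (only their order), and permuting propositional letters is a relabeling that carries models to models bijectively.

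\textbf{Item (i).} First I would observe that $\tau^c(\phi)$ is literally the same set of clauses as $\phi$ — recall that in the set-of-sets representation $\phi = \{C_1,\dots,C_m\}$, and $\tau^c$ merely reindexes which clause is called the ``$i$-th'' one, so $\{C_{\tau^c(1)},\dots,C_{\tau^c(m)}\} = \{C_1,\dots,C_m\}$ as sets. Since the satisfaction relation $w \sat \phi$ for a CNF formula depends only on the \emph{set} of clauses (a world satisfies a conjunction iff it satisfies every conjunct, and set membership is order-independent), $w \sat \phi$ iff $w \sat \tau^c(\phi)$. This is a one-line argument once the set-based semantics is made explicit.

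\textbf{Item (ii).} For the propositional-letter permutation, I would argue by induction on formula structure, or more directly at the level of literals and clauses. The key lemma is: for any literal $l$ over $\Sigma_\phi$ and any world $w$, we have $l \in w$ iff $\tau^p(l) \in \tau^p(w)$ — this is immediate because $\tau^p$ acts as a bijection on literals (sending $p_j \mapsto p_{\tau^p(j)}$ and $\neg p_j \mapsto \neg p_{\tau^p(j)}$) and $\tau^p(w)$ is by definition the image of $w$ under this bijection. Lifting this to clauses: $w \sat C_i$ (i.e.\ some literal of $C_i$ lies in $w$) iff some literal of $\tau^p(C_i)$ lies in $\tau^p(w)$, i.e.\ $\tau^p(w) \sat \tau^p(C_i)$. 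Taking the conjunction over all clauses gives $w \sat \phi$ iff $\tau^p(w) \sat \tau^p(\phi)$. I should also note $\tau^p(w)$ is again a legitimate world over $\Sigma_\phi$ (each letter occurs exactly once), since $\tau^p$ is a bijection on $S_n$.

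\textbf{Items (iii) and (iv).} Item (iii) is just the composition of (i) and (ii): by (ii), $w \sat \phi$ iff $\tau^p(w) \sat \tau^p(\phi)$; and $\vec{\tau}(\phi) = \tau^c(\tau^p(\phi))$, so by (i) applied to the formula $\tau^p(\phi)$, $\tau^p(w) \sat \tau^p(\phi)$ iff $\tau^p(w) \sat \tau^c(\tau^p(\phi)) = \vec{\tau}(\phi)$. Chaining these yields (iii). For item (iv), if $\phi$ is satisfiable with model $w$, then by (iii) $\tau^p(w)$ is a model of $\vec{\tau}(\phi)$, so $\vec{\tau}(\phi)$ is satisfiable; conversely, since $\vec{\tau}$ is invertible in $\calS^{m\times n}$ (componentwise inverses $\tuple{(\tau^c)^{-1},(\tau^p)^{-1}}$), if $\vec{\tau}(\phi)$ has a model $w'$ then applying (iii) to $\vec{\tau}(\phi)$ and the inverse permutation shows $\phi = \vec{\tau}^{-1}(\vec{\tau}(\phi))$ is satisfiable. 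I do not expect any genuine obstacle here; the only thing requiring care is being pedantic about the set-of-sets semantics so that item (i) is actually a triviality rather than something needing a case analysis, and making sure the action of $\tau^p$ on worlds, literals, clauses, and formulae is defined consistently so the induction in (ii) goes through cleanly.
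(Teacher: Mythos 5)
Your proof is correct, and it follows exactly the route the paper has in mind: the paper states this proposition without proof (``can be shown''), treating it as routine, and your argument---clause permutation leaves the set of clauses unchanged, letter permutation is a bijection on literals that carries worlds to worlds, (iii) by composition, (iv) from (iii) plus invertibility of $\vec{\tau}$---is precisely the standard filling-in of those omitted details. No gaps; your care about the set-of-sets semantics in (i) and the consistent action of $\tau^p$ on literals, clauses, and worlds in (ii) is exactly what makes the result immediate.
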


\nd The above proposition tells us that the satisfiability of formula is \emph{permutation invariant}, in the sense that satisfiability is preserved if we rearrange clauses, variables or both. As a consequence, we obviously have
$\Phi_{\mathtt{sat}}^{n,m}(\phi)  =  \Phi_{\mathtt{sat}}^{n,m}(\vec{\tau}(\phi)) \ , \forall \vec{\tau} \in  \calS^{m \times n}$.

Another property that we are going show is that if we make a permutation $\vec{\sigma}(\phi)$ on a formula $\phi$, the output of the GNN does not change. Formally, we say that a mapping $F\in  \calF_{\mathtt{GNN}}^{n,m}$ is \emph{permutation invariant} if it satisfies
$F(\phi)  =  F(\vec{\sigma}(\phi)),\forall \vec{\sigma} \in  \calS^{m \times n}$
while we say that $F\in  \calF_{\mathtt{GNN}}^{n,m}$ is \emph{equivalence invariant} if it satisfies
$F(\phi)  =  F(\psi), \forall \phi, \psi \in \calL^k_{\mathtt{CNF}}(n,m) \text{ s.t. } \phi \equiv \psi$.
Clearly, the feasibility of a MILP problem $M$ does not change if the variables and/or constraints in $M$ are reordered. In particular, the MILP problem $M_\phi$  is feasible iff $M_{\vec{\sigma}(\phi)}$ is feasible.
A direct consequence of Proposition~\ref{propmodelinvariance} above and~\cite{Chen23} is the following stating that if we rearrange clauses and/or propositional letters the result of the GNN does not change.
%
\begin{proposition}\label{invariance}
    Consider $\phi \in \calL^k_{\mathtt{CNF}}(n,m)$. Let $\vec{\sigma} \in  \calS^{m \times n}$ be a permutation of clauses and/or propositional letters in $\phi$.   Then all $F\in \calF_{\mathtt{GNN}}^{n,m}$ are permutation invariant, 
    \ie~$F(\phi) = F(\vec{\sigma}(\phi))$.
\end{proposition}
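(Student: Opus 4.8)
The plan is to factor the statement through two observations: \textbf{(a)} the encoding $\phi \mapsto G_\phi$ is \emph{equivariant} with respect to $\calS^{m\times n}$, meaning that $G_{\vec{\sigma}(\phi)}$ is the very same weighted, featured bipartite graph as $G_\phi$ up to a relabeling of its nodes dictated by $\vec{\sigma}$; and \textbf{(b)} the GNN read-out specified by the input, hidden and output layers is invariant under such relabelings. Observation (b) is exactly the permutation-invariance established for MILP-graphs in~\cite{Chen23} (and is the counterpart, on the GNN side, of the feasibility-invariance of $M_\phi$ under reordering of variables/constraints noted just above the statement; on the semantic side it mirrors Proposition~\ref{propmodelinvariance}(iv)). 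So for (b) I would either invoke~\cite{Chen23} directly or reprove it in the present notation by the short induction sketched below; the genuine work is (a).

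For (a), write $\vec{\sigma} = \tuple{\tau^c,\tau^p}$. Applying $\tau^c$ to $\phi$ merely reorders the clauses $C_1,\dots,C_m$, which permutes the rows of $\vec{A}_\phi$ and the entries of $\vec{b}_\phi$; applying $\tau^p$ renames the letters occurring in $\phi$, which permutes the columns of $\vec{A}_\phi$; neither operation changes the number of negative literals of any clause, so the entries of $\vec{A}_\phi$ and $\vec{b}_\phi$ are merely relocated, not altered (the precise index substitutions being read off from the definitions of $\tau^c(\phi)$ and $\tau^p(\phi)$). Transporting this through items 1--5 of the MILP-graph construction, the bijection that relabels constraint node $v_i$ according to $\tau^c$ and variable node $w_j$ according to $\tau^p$ carries $G_\phi$ onto $G_{\vec{\sigma}(\phi)}$: it is edge-preserving, it preserves every edge weight $E_{ij} = (\vec{A}_\phi)_{ij}$, it preserves the $1$-dimensional constraint features $\vec{b}_i$, and it trivially preserves the $0$-dimensional (empty) variable features. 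Hence $\vec{\sigma}$ induces an isomorphism of weighted, featured bipartite graphs $G_\phi \cong G_{\vec{\sigma}(\phi)}$.

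For (b), I would prove by induction on the layer index $k$ that the layer-$k$ embedding of $v_i$ computed on $G_\phi$ equals the layer-$k$ embedding of the node to which $v_i$ is relabeled, computed on $G_{\vec{\sigma}(\phi)}$, and symmetrically for the variable nodes $w_j$. The base case $k=1$ is immediate: $f^V_\mathtt{in}$ and $f^W_\mathtt{in}$ act nodewise on features that agree under the relabeling. For the inductive step, the hidden-layer recurrences (Eqs.~(\ref{hiddmilpA})--(\ref{hiddmilpB})) apply the learnable maps $g^V_{k+1}, g^W_{k+1}$ nodewise to the current embedding together with a weighted sum over the opposite side of the bipartition; because the relabeling is a bijection preserving the weights $E_{ij}$, reindexing that sum by $\tau^p$ (resp.\ $\tau^c$) leaves it unchanged, and the inductive hypothesis identifies the summands. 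Finally the output (Eq.~(\ref{gnnout})) is $f_\mathtt{out}$ applied to the two \emph{unordered} sums $\sum_i \vec{h}_i^{l,V}$ and $\sum_j \vec{h}_j^{l,W}$, which are invariant under reindexing by $\tau^c$ and $\tau^p$; combined with the induction this yields $\hat{y}_{G_\phi} = \hat{y}_{G_{\vec{\sigma}(\phi)}}$, i.e.\ $F(\phi) = F(\vec{\sigma}(\phi))$ for every $F \in \calF_{\mathtt{GNN}}^{n,m}$. I expect no real obstacle: the only thing demanding care is keeping the index substitutions induced by $\tau^c$ and $\tau^p$ (and their inverses) straight when matching the two weighted sums in the inductive step, which is precisely why the statement can be flagged as a direct consequence of~\cite{Chen23}.
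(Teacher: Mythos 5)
Your proposal is correct and matches the paper's (implicit) argument: the paper offers no detailed proof, asserting the result as a direct consequence of Proposition~\ref{propmodelinvariance} and the permutation invariance of GNNs on MILP-graphs from~\cite{Chen23}, which is precisely your decomposition into (a) equivariance of the $\phi \mapsto M_\phi \mapsto G_\phi$ encoding and (b) invariance of the message-passing/sum-pooling computation under weight- and feature-preserving node relabelings. Your layer-wise induction simply spells out the content of the cited result, so nothing is missing or different in substance.
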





\nd Similarly, with a slightly more involved proof, it can be shown that:

\begin{proposition}\label{invarianceequiv}
Consider two equivalent formulae $\phi$ and $\psi$ in $\calL^k_{\mathtt{CNF}}(n,m)$. Then 
$\Phi_{\mathtt{sat}}^{n,m}(\phi) = \Phi_{\mathtt{sat}}^{n,m}(\psi)$; and all $F\in \calF_{\mathtt{GNN}}^{n,m}$ are equivalence invariant, \ie~$F(\phi) = F(\psi)$.
\end{proposition}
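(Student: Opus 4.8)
The first claim is immediate and I would dispatch it in a line: $\phi \equiv \psi$ means $\Mod{\phi} = \Mod{\psi}$, so $\phi$ is satisfiable iff $\psi$ is, whence $\Phi_{\mathtt{sat}}^{n,m}(\phi) = \Phi_{\mathtt{sat}}^{n,m}(\psi)$.

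For the second claim the plan is to reduce it to the permutation invariance already proved in Proposition~\ref{invariance}. I would first try to establish a normal-form lemma: \emph{if $\phi, \psi \in \calL^k_{\mathtt{CNF}}(n,m)$ are logically equivalent, then $\psi = \vec{\sigma}(\phi)$ for some $\vec{\sigma} = \tuple{\tau^c,\tau^p} \in \calS^{m\times n}$}. Granting this, $F(\phi) = F(\vec{\sigma}(\phi)) = F(\psi)$ for every $F \in \calF_{\mathtt{GNN}}^{n,m}$ by Proposition~\ref{invariance}, and we are done. To prove the lemma I would argue in three steps: (1) reduce to a common alphabet — a letter not occurring in $\phi$ is inessential, so after a renaming $\tau^p$ we may assume $\Sigma_\phi = \Sigma_\psi$, using as an auxiliary fact that $|\Sigma_\phi| = |\Sigma_\psi|$; (2) observe that, over a common alphabet, $\Mod{\phi} = \Mod{\psi}$ translates, through the encoding $\tau$ of Section~\ref{sekcnfcmilp} and Proposition~\ref{pphi}, into $M_\phi$ and $M_\psi$ having the same feasible set in $\{0,1\}^n$; (3) show that the two $k$-clause sets must then coincide up to a reordering $\tau^c$, because the falsifying assignments of a $k$-clause form a codimension-$k$ subcube of $\{0,1\}^n$ and the relevant subcubes are pinned down by the common set of excluded assignments.

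The main obstacle is step (3), and it is really the whole content of the statement. The falsifying set of a $k$-CNF is a union of codimension-$k$ subcubes, and equivalence of two $k$-CNFs means that these unions coincide; but the same union can be covered by genuinely different families of subcubes, so step (3) does not go through unrestrictedly and I expect to need an additional hypothesis (some form of irredundancy of the clauses, and perhaps more). An alternative route I would keep in reserve is to bypass the normal-form lemma and use instead the Weisfeiler--Leman / GNN-expressivity correspondence underlying~\cite{Chen23}: all $F \in \calF_{\mathtt{GNN}}^{n,m}$ agree on two MILP-graphs exactly when those graphs are $1$-WL indistinguishable, so it would suffice to prove that $\phi \equiv \psi$ forces $G_\phi$ and $G_\psi$ to be $1$-WL indistinguishable. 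This converts the problem into a purely combinatorial statement about the bipartite encodings, and establishing that last implication — from semantic equivalence of the formulae to indistinguishability of their graphs — is where I expect the argument to be genuinely delicate, and possibly where one must restrict attention to a subclass such as the unfoldable formulae considered later.
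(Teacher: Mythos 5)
Your treatment of the first claim is fine, and your suspicion about where the second claim gets stuck is exactly right --- but the obstruction is worse than ``needs an irredundancy hypothesis''. The normal-form lemma you propose is false: logically equivalent formulae of $\calL^k_{\mathtt{CNF}}(n,m)$ need not be permutations of one another, and need not even have the same number of clauses or of occurring variables (both $(p_1\lor p_2)\land(p_1\lor\neg p_2)$ and $(p_1\lor p_2)\land(p_1\lor\neg p_2)\land(p_1\lor p_3)\land(p_1\lor\neg p_3)$ are equivalent to $p_1$), so already the auxiliary fact $|\Sigma_\phi|=|\Sigma_\psi|$ in your step (1) fails. Even with the counts fixed, the clause sets need not coincide up to $\calS^{m\times n}$: $\phi=(p_1\lor p_2)\land(\neg p_1\lor p_2)\land(p_2\lor p_3)$ and $\psi=(p_1\lor p_2)\land(\neg p_1\lor p_2)\land(p_2\lor\neg p_3)$ are both equivalent to $p_2$, yet no pair $\tuple{\tau^c,\tau^p}$ maps one to the other, since a permutation preserves literal polarity and the two formulae contain different numbers of negative literals. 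So step (3) cannot be saved, as you anticipated.

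Your fallback route through the Weisfeiler--Leman correspondence cannot be repaired either, because logical equivalence does not imply $1$-WL indistinguishability of the MILP-graphs. In the second example the constraint-node features are $b_i = 1 - neg_i$, giving the multiset $\{1,0,1\}$ for $G_\phi$ and $\{1,0,0\}$ for $G_\psi$ (in the first example the two graphs do not even have the same order), so the colourings differ already at initialisation and $\phi\not\sim\psi$; Proposition~\ref{thm4.2chen23} then yields some $F\in\calF_{\mathtt{GNN}}^{n,m}$ with $F(\phi)\neq F(\psi)$ although $\phi\equiv\psi$. Consequently, under the literal reading of ``equivalent'' as logical equivalence, the second claim cannot be proved at all without contradicting Proposition~\ref{thm4.2chen23}; it only survives under a strictly weaker notion, e.g.\ equality of the set-of-sets representations up to a permutation in $\calS^{m\times n}$ (which is exactly Proposition~\ref{invariance}) or equivalence restricted to pairs whose bipartite encodings are isomorphic. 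The paper supplies no proof to compare against (it only asserts that a ``slightly more involved proof'' exists), so the verdict is: your first claim stands, and your reduction has a genuine gap at step (3) that cannot be closed without weakening or reinterpreting the statement itself.
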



{\bf On GNN limitations to represent $k$-CNF formulae.}
We now show a limitation of GNNs to represent  $k$-CNF formulae. 
%
%
%
Specifically, we will show that the whole GNN family $\calF_{\mathtt{GNN}}^{n,m}$ cannot distinguish two formulae that differ \wrt~satisfiability.


It has recently been shown that the \emph{expressiveness} of GNNs can be
related to the  \emph{Weisfeiler-Leman} (WL)
graph isomorphism test~\cite{Weisfeiler68} and logics, such as \eg~\emph{finite variable counting logic} $\mathbf{C}_2$ (see, \eg~\cite{Barcelo20,Grohe21,Loukas20,Morris19,Morris21,Morris23,Pflueger24,Tahmasebi23,Xu19}).
Let us recall that the WL graph isomorphism test is a heuristic test for the existence of an \emph{isomorphism} between two graphs $G$ and $H$. Specifically, in graph theory, an \emph{isomorphism} of graphs $G$ and $H$ is a bijection $f$ between the nodes of $G$ and $H$, such that any two nodes $v_1$ and $v_2$ of $G$ are adjacent in $G$ if and only if $f(v_1)$ and $f(v_2)$ are adjacent in $H$. We say that $f$ is a \emph{edge preserving} bijection.
If an isomorphism exists between two graphs, then the graphs are called \emph{isomorphic}, and is denoted $G \cong H$. 
%
%
For \emph{labelled graphs}, we consider the following definition.
A \emph{labelled graph} is of the form  $G = \tuple{V,E,l}$, where $l$ is a 
\emph{label function} $l\colon V\cup E \to S$, where $S \subseteq \naturaln$. $l(r)$ is a \emph{label} of $r \in V \cup E$. 
In the case of labelled graphs, an isomorphism is a node bijection $f$ which is both edge-preserving and label-preserving, that is,  we additionally require that $l(w) = l(f(w))$ for $w \in V \cup E$.


The algorithm $\mathtt{WL}_{\mathtt{MILP}}$ below is a variant of the WL (\viz.~1-WL) algorithm adapted to our case.\footnote{See~\cite{Chen23} for similar algorithms for the MILP case.}
In the algorithm, $\mathtt{Colour}_{0}^{V}$ (resp.~$\mathtt{Colour}_{0}^{W}$)  injectively maps the label of a constraint (resp.~variable) node to a unique natural number; and $\mathtt{Colour}_{i}^{V}$ and $\mathtt{Colour}_{i}^{W}$ injectively map a  pair to a unique natural number, which has not been used in previous iterations.
%
%
%
%
\begin{algorithm}[t]
\caption{$\mathtt{WL}_{\mathtt{MILP}}$} \label{alg:wltest}
{
\scriptsize
\begin{algorithmic}[1]
\Require A bipartite labelled graph $G = \tuple{V\cup W, E, l}$ 
\Ensure A colouring of the nodes in $G$
\State $d \gets |V\cup W|$
\Comment{$d$ is the degree of $G$}
\ForAll{$v\in V$} 
    \State $C_0^V(v) \gets \mathtt{Colour}_{0}^{V} (l(v))$     
    \Comment{Initialise constraint node colour}
\EndFor
\ForAll{$w\in W$}
    \State $C_0^W(w) = \mathtt{Colour}_{0}^{W} (l(w))$ 
    \Comment{Initialise variable node colour}
    \EndFor
\For{$i = 1,2, \dots, d$}
    \ForAll{$v\in V$} 
        \State $C_i^V(v) \gets \mathtt{Colour}_{i}^{V}(C_{i-1}^V(v),  \multiset{\tuple{C_{i-1}^{W}(w), E_{vw}} \mid w \in N(v)})$
        \Comment{Update constraint node colour}
    \EndFor    
    \ForAll{$w\in W$}
        \State $C_i^W(w) \gets \mathtt{Colour}_{i}^{W}(C_{i-1}^W(w),  \multiset{\tuple{C_{i-1}^{V}(v), E_{vw}} \mid v \in N(w)})$
        \Comment{Update variable node colour}
    \EndFor   
\EndFor
\State \Return $\tuple{C_{d}^V, C_d^W}$
\end{algorithmic}
}
\end{algorithm}
%
%
After $i$ iterations the colour of a node $v$ captures some structure of its $k$-hop neighbourhood, \ie~the subgraph induced by all nodes reachable by walks of length at most $k$. 
To test if two graphs $G$ and $H$ are non-isomorphic, we run the above algorithm in parallel on both graphs. If the two graphs have a different number of nodes coloured $c \in \naturaln$ at some iteration, the 1-WL concludes that the graphs are not isomorphic.
Now, as any $k$-CNF formula $\phi$ is encoded as a labelled bipartite graph $G_\phi$, the above colouring of a labelled graph induces a colouring of propositional letters and clauses in $\phi$.
as illustrated by Algorithm~$\mathtt{WL}_{\mathtt{kCNF}}$.

\begin{algorithm}
\caption{$\mathtt{WL}_{\mathtt{kCNF}}$} \label{alg:wltestsat}
{
\scriptsize
\begin{algorithmic}[1]
\Require A $k$-CNF formula $\phi$ 
\Ensure A colouring of propositional letters and clauses in $\phi$
\State $G_\phi \gets \tuple{V_\phi\cup W_\phi, E_\phi, l_\phi}$ 
\Comment{Initialise $G_\phi$ as the labelled bipartite graph representation of $\phi$}
\State \Return $\mathtt{WL}_{\mathtt{MILP}}(G_\phi)$
\end{algorithmic}
}
\end{algorithm}

We can now use this colouring to define a sort of isomorphism among formulae. Specifically, given two $k$-CNF formulae $\phi$ and $\psi$, we say that $\phi$ and $\psi$ are \emph{graph-isomorphic} iff their labelled bipartite graph representations $G_\phi$ and $G_\psi$ are isomorphic.

Unfortunately, there exist some non-isomorphic graph pairs that WL test fail to distinguish. That is, it is well-known that the WL test is known to be correct, but not complete, in the sense that if the WL algorithm says that two graphs are not isomorphic, then they are indeed so. However, the other direction does not hold: that is, there are non-isomorphic graphs where the difference is not detected by the WL algorithm. Therefore, it is not surprising that this applies to the case of $k$-CNF formulae as well. 
    
Specifically, throughout this paper, we use the notation $\phi \sim \psi$ to denote that $\phi$ and $\psi$ \emph{cannot be distinguished by the 1-WL test}, that is, $\mathtt{WL}_{\mathtt{kCNF}}(\phi) = \mathtt{WL}_{\mathtt{kCNF}}(\psi)$, for all colouring functions $\mathtt{Colour}_{0}^{V}$, $\mathtt{Colour}_{0}^{W}$, $\mathtt{Colour}_{i}^{V}$ and $\mathtt{Colour}_{j}^{W}$ ($i,j>0$).

The following proposition indicates that $\calF_{\mathtt{GNN}}^{n,m}$ actually has the same expressive power as the 1-WL test, which is a direct consequence of~\cite[Theorem 3.1]{Chen23} 
proven for arbitrary MILP problems.

\begin{proposition}[Theorem 3.1 in~\cite{Chen23}]\label{thm4.2chen23}
For all $\phi,\psi \in \calL^k_{\mathtt{CNF}}(n,m)$, it holds that $\phi \sim \psi$ iff $F(\phi) = F(\psi)$, for all $F \in\calF_{\mathtt{GNN}}^{n,m}$.
\end{proposition}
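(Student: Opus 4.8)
The plan is to reduce the statement to \cite[Theorem 3.1]{Chen23}, which establishes exactly the analogous equivalence—1-WL-indistinguishability if and only if agreement of every GNN in the MILP-GNN family—for arbitrary MILP instances and their weighted bipartite graph encodings. The bridge is the fixed, deterministic pipeline $\phi \mapsto M_\phi \mapsto G_\phi$ from Sections~\ref{sekcnfcmilp}--\ref{secmilpgnn}: by construction every $F \in \calF_{\mathtt{GNN}}^{n,m}$ is the composition of this encoding with a GNN from the MILP-GNN family (Eqs.~(\ref{hiddmilpA})--(\ref{gnnout})) restricted to inputs of the form $G_\phi$, and $\mathtt{WL}_{\mathtt{kCNF}}(\phi)$ is by definition $\mathtt{WL}_{\mathtt{MILP}}(G_\phi)$, i.e.\ the 1-WL colouring of the labelled bipartite graph $G_\phi$. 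Hence $\phi \sim \psi$ holds precisely when $G_\phi$ and $G_\psi$ are 1-WL-indistinguishable as labelled bipartite graphs. So the whole proof amounts to transferring each direction of the MILP theorem through this label-respecting encoding.

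For the ``only if'' direction, assume $\phi \sim \psi$, so $G_\phi$ and $G_\psi$ are 1-WL-indistinguishable. I would invoke the easy half of \cite[Theorem 3.1]{Chen23}, namely ``$\mathtt{GNN} \preceq$ 1-WL'': a message-passing GNN with sum aggregation cannot separate graphs that colour refinement fails to separate, which one checks by induction on the layer index $k$, since each update in Eqs.~(\ref{hiddmilpA})--(\ref{hiddmilpB}) is a function of the very data (current colour plus multiset of neighbour colours and edge weights) that 1-WL uses, and the final readout (Eq.~(\ref{gnnout})) is a function only of the colour histograms. Therefore every GNN in the family outputs the same value on $G_\phi$ and $G_\psi$, and precomposing with the fixed encoding gives $F(\phi) = \hat{y}_{G_\phi} = \hat{y}_{G_\psi} = F(\psi)$ for all $F \in \calF_{\mathtt{GNN}}^{n,m}$.

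For the ``if'' direction I argue by contraposition. If $\phi \not\sim \psi$ then there is a choice of the injective colouring functions for which $\mathtt{WL}_{\mathtt{MILP}}(G_\phi) \neq \mathtt{WL}_{\mathtt{MILP}}(G_\psi)$, i.e.\ $G_\phi$ and $G_\psi$ are 1-WL-distinguishable. Here I invoke the nontrivial half of \cite[Theorem 3.1]{Chen23}: for 1-WL-distinguishable MILP-graphs there exists a GNN $F'$ in the MILP family with $F'(M_\phi) \neq F'(M_\psi)$—this is the part that requires building a GNN which simulates enough rounds of colour refinement and then separates the resulting colour histograms, using universal approximation of the learnable maps $f^V_k, f^W_k, g^V_k, g^W_k, f_\mathtt{out}$ by multilayer perceptrons. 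Precomposing $F'$ with $\phi \mapsto M_\phi \mapsto G_\phi$ yields $F \in \calF_{\mathtt{GNN}}^{n,m}$ with $F(\phi) = \hat{y}_{G_\phi} \neq \hat{y}_{G_\psi} = F(\psi)$, as required.

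The only genuine bookkeeping—and where I expect the mild friction—is twofold: (a) verifying that the GNN architecture of Eqs.~(\ref{hiddmilpA})--(\ref{gnnout}) coincides with, or is expressively equivalent to, the MILP-GNN class for which \cite{Chen23} proves Theorem 3.1, so that both implications transfer verbatim (the bipartite update with summed edge-weighted neighbour messages and the sum-pool readout match their model); and (b) checking that restricting the input domain from all MILPs to the image of $\calL^k_{\mathtt{CNF}}(n,m)$ under $\phi \mapsto G_\phi$ breaks neither implication—it does not, since the ``only if'' direction is a universal upper bound on GNN expressiveness, and the ``if'' direction only needs a separating GNN to exist on the two specific graphs $G_\phi, G_\psi$, which the general theorem already supplies. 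The CNF-to-MILP layer introduces no new separating power and no new obstruction because it is a single fixed, label-respecting function applied identically to every input. The main conceptual obstacle, the existence of a 1-WL-simulating GNN, is precisely the content of the cited theorem and is not re-proved here.
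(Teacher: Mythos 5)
Your proposal is correct and matches the paper's own treatment: the paper proves nothing beyond observing that the statement is a direct consequence of \cite[Theorem 3.1]{Chen23}, transferred through the fixed encoding $\phi \mapsto M_\phi \mapsto G_\phi$ together with the definition $\mathtt{WL}_{\mathtt{kCNF}}(\phi) = \mathtt{WL}_{\mathtt{MILP}}(G_\phi)$, which is exactly your reduction (you simply spell out the two directions and the domain-restriction bookkeeping in more detail than the paper does).
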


\nd Unfortunately, the following proposition reveals that the 1-WL test is less expressive than $\Phi_{\mathtt{sat}}^{n,m}$.

\begin{proposition}  \label{undist}
    There are two $k$-CNF formulae $\phi$ and $\psi$, the former satisfiable while the latter is not, but $\phi \sim \psi$, \ie~they cannot be distinguished by the 1-WL test. 
\end{proposition}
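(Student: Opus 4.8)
The plan is to use the standard ``covering'' construction, the textbook source of graph pairs that the $1$-WL test cannot separate, and to engineer the cover so that one side is satisfiable and the other is not. The enabling fact is that $1$-WL colours are an invariant of covering maps: if $\pi$ is a covering map of labelled graphs from $\tilde G$ onto a connected $G$ (that is, $\pi$ is surjective and around every node restricts to a weight-preserving and label-preserving bijection of neighbourhoods), then an easy induction on the iteration index --- each colouring map being a fixed deterministic function of its arguments, and $\pi$ preserving node labels --- shows that every node $\tilde v$ of $\tilde G$ receives, at every step of Algorithm $\mathtt{WL}_{\mathtt{MILP}}$, exactly the colour that $\pi(\tilde v)$ receives in the run on $G$. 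Hence any two labelled graphs that are both $2$-fold covers of the same connected labelled graph have identical colour histograms at every iteration, so they are $1$-WL-indistinguishable. It therefore suffices to produce a connected labelled bipartite graph that is the MILP-graph of some $k$-CNF formula together with two of its $2$-fold covers that are the MILP-graphs of, respectively, a satisfiable and an unsatisfiable $k$-CNF formula.

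For the base take $\alpha$, the \emph{complete} $k$-CNF formula over $k$ fresh variables $v_1,\dots,v_k$: the conjunction of all $2^k$ distinct $k$-clauses $C_s = \bigvee_{i=1}^k v_i^{s_i}$, $s\in\{0,1\}^k$, with $v_i^{1}=v_i$ and $v_i^{0}=\neg v_i$. Since $C_s$ is falsified exactly by the assignment $\bar s$, these clauses jointly forbid every assignment, so $\alpha$ is unsatisfiable, while its MILP-graph $G_\alpha$ ($2^k$ constraint nodes, $k$ variable nodes) is connected. Set $\psi := \alpha \uplus \alpha'$, the conjunction of $\alpha$ with a variable-renamed disjoint copy $\alpha'$: this is unsatisfiable, lies in $\calL^k_{\mathtt{CNF}}(2k, 2^{k+1})$, and its MILP-graph is the trivial $2$-fold cover of $G_\alpha$ (two disjoint copies). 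For $\phi$ take a \emph{twisted} $2$-fold cover: fix any $e\in\{0,1\}^k\setminus\{0^k,1^k\}$ (which exists since $k\ge 2$), split each $v_i$ into two variables $v_{i,0},v_{i,1}$, and replace each clause $C_s$ of $\alpha$ by the two clauses obtained by substituting, for each $i$, the variable $v_i$ with $v_{i,\,s_i\oplus e_i}$, resp.\ with $v_{i,\,s_i\oplus e_i\oplus1}$, and keeping all signs. A routine check shows $\phi$ is a well-formed element of $\calL^k_{\mathtt{CNF}}(2k,2^{k+1})$ (no clause contains a repeated or complementary pair of literals, and the $2^{k+1}$ clauses are pairwise distinct), and that its MILP-graph is the $2$-fold cover of $G_\alpha$ given by the $\mathbb{Z}_2$-voltage $\omega(C_s, v_i)=s_i\oplus e_i$; hence $G_\phi$ and $G_\psi$ are $1$-WL-indistinguishable, \ie~$\phi\sim\psi$.

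It remains to check that $\phi$ is satisfiable: under the assignment $v_{i,b}\mapsto b$, the first lift of $C_s$ evaluates exactly as $C_s$ under the original assignment $v_i\mapsto s_i\oplus e_i$, and the second lift as $C_s$ under $v_i\mapsto s_i\oplus e_i\oplus1$; both of these original assignments differ from $\bar s$ --- the unique falsifying assignment of $C_s$ --- precisely because $e\notin\{0^k,1^k\}$, so every lifted clause is satisfied. Thus $\phi$ is satisfiable and $\psi$ is not, yet $\phi\sim\psi$, which proves the proposition. (In passing, Proposition~\ref{propmodelinvariance} already forces $G_\phi\not\cong G_\psi$, since isomorphic MILP-graphs would exhibit $\psi$ as a clause-and-variable permutation of $\phi$ and hence make the two equisatisfiable; so this is a genuine blind spot of $1$-WL, not an artefact of the encoding.)

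The only step that is not pure bookkeeping is the satisfiability of $\phi$: covering-equivalence of $G_\phi$ and $G_\psi$ comes for free, but one has to choose the twist so that the two ``sheets'' can carry different truth values that jointly satisfy every lifted clause --- which is exactly what the condition $e\notin\{0^k,1^k\}$ secures, since a constant twist ($e\in\{0^k,1^k\}$) would turn one sheet into a faithful copy of the unsatisfiable $\alpha$. Verifying the covering invariance of $1$-WL for this labelled bipartite variant of the test, and the well-formedness of $\phi$ as a $k$-CNF formula, is routine.
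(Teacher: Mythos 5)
Your proof is correct, but it takes a genuinely different route from the paper. The paper exhibits one concrete pair: $\phi$ an even XOR-cycle $(p_1\lxor p_2)\land\dots\land(p_6\lxor p_1)$ (satisfiable) and $\psi$ two odd XOR-cycles (unsatisfiable), each XOR expanded into two $2$-clauses, and then argues directly that the colour refinement on the two bipartite MILP-graphs stabilises identically (all variable nodes alike, all ``$+1$'' constraint nodes alike, all ``$-1$'' constraint nodes alike, with equal counts), so the $1$-WL histograms coincide. You instead prove a general covering-invariance lemma for the labelled, weighted bipartite variant of $1$-WL and then build, for every $k\ge 2$, a twisted versus untwisted $2$-fold cover of the complete $k$-CNF formula over $k$ variables, using the voltage $e\notin\{0^k,1^k\}$ both to keep the cover connected-sheet-mixing enough to be satisfiable and to keep the base structure intact for indistinguishability; I checked the fibre bijections, the distinctness of the $2^{k+1}$ lifted clauses, and the satisfying assignment $v_{i,b}\mapsto b$, and all go through. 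The two arguments exploit the same underlying phenomenon (the paper's pair is itself the classical pair of $2$-fold covers $C_6$ and $2C_3$ of a triangle), but yours buys generality — a family of witnesses with genuine $k$-literal clauses for every $k\ge 2$, plus a reusable lemma that any two same-degree covers of a common base are indistinguishable by Algorithm~$\mathtt{WL}_{\mathtt{MILP}}$ — at the cost of more machinery, whereas the paper's choice is a minimal concrete instance verifiable by direct inspection of the colour classes.
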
 
\begin{proof}
In the following, let us define the \emph{exclusive or} operator as follows:
$\varphi \lxor \varphi' \defined (\varphi \lor \varphi') \land  (\neg \varphi \lor \neg \varphi')$.
\nd Now, let $\phi$ and $\psi$ defined as follows

\begin{eqnarray}
    \phi & = & (p_1 \lxor p_2) \land (p_2 \lxor p_3) \land (p_3 \lxor p_4) \land (p_4 \lxor p_5) \nonumber \\
    && \land (p_5 \lxor p_6) \land (p_6 \lxor p_1) \label{phyind} \\
    \psi & = & (p_1 \lxor p_2) \land (p_2 \lxor p_3) \land (p_3 \lxor p_1) \land (p_4 \lxor p_5) \nonumber \\
    && \land (p_5 \lxor p_6) \land (p_6 \lxor p_4) \label{psiind}
\end{eqnarray}

\nd One may easily verify that $\phi$ is satisfiable ($w=p_1\bar{p}_2p_3\bar{p}_4p_5\bar{p}_6$ is model), while $\psi$ is not.
%
%
%
Therefore, $\Phi_{\mathtt{sat}}(\phi) \neq \Phi_{\mathtt{sat}}(\psi)$.
Now, we show however that $\phi \sim \psi$, \ie~$\mathtt{WL}_{\mathtt{kCNF}}(\phi) = \mathtt{WL}_{\mathtt{kCNF}}(\psi)$. That is, $\phi$ and $\psi$ cannot be distinguished by the 1-WL test.

The MILP representations $M_\phi$ and $M_\psi$ are illustrated below ($a_h$, with $h=1,2$, are constants with $a_1=1,a_2=-1$):

\begin{equation}\label{milpphiexA}
M_\phi \defined \left \{
\begin{array}{cc}
\begin{array}{l}
   a_hx_i + a_hx_{i+1}  \geq a_h  \\
   a_hx_6 + a_hx_1  \geq a_h
\end{array} &
\begin{array}{l}
  1\leq i < 6,  x_j \in \{0,1\}, \\
   1\leq j \leq 6
\end{array}  
\end{array} \right .
\end{equation}

%

\begin{equation}\label{milppsiex}
M_\psi \defined \left \{
\begin{array}{cc}
\begin{array}{l}
   a_hx_i + a_hx_{i+1}  \geq a_h  \\
   x_3 + x_1  \geq a_h\\
   a_h x_k + a_h x_{k+1}  \geq a_h  \\
   x_6 + x_4  \geq a_h
\end{array} &
\begin{array}{l}
   1\leq i < 3, 4\leq k < 6 \\
   x_j \in \{0,1\}, 1\leq j \leq 6
\end{array}  
\end{array} \right .
\end{equation}

%
\nd It can easily verified that indeed, $M_\phi$ is feasible 
($\vec{\bar{x}} = \tuple{1,0,1,0,1,0}$ is a solution), while $M_\psi$ is not.
\begin{figure}[t]
\centering
\begin{subcaptionblock}{.25\textwidth}
\centering
\includegraphics[scale=0.25]{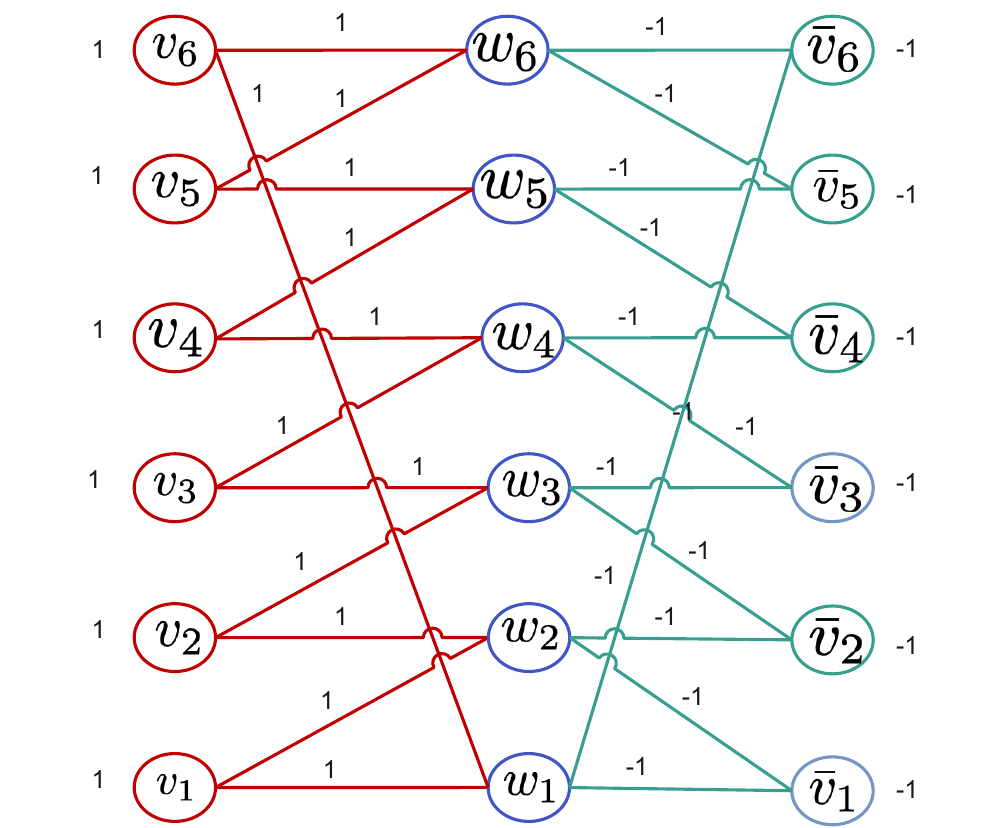}
\caption{Bipartite graph $G_\phi$.}\label{figbgphi}
\end{subcaptionblock}%
\begin{subcaptionblock}{.25\textwidth}
\centering
\includegraphics[scale=0.25]{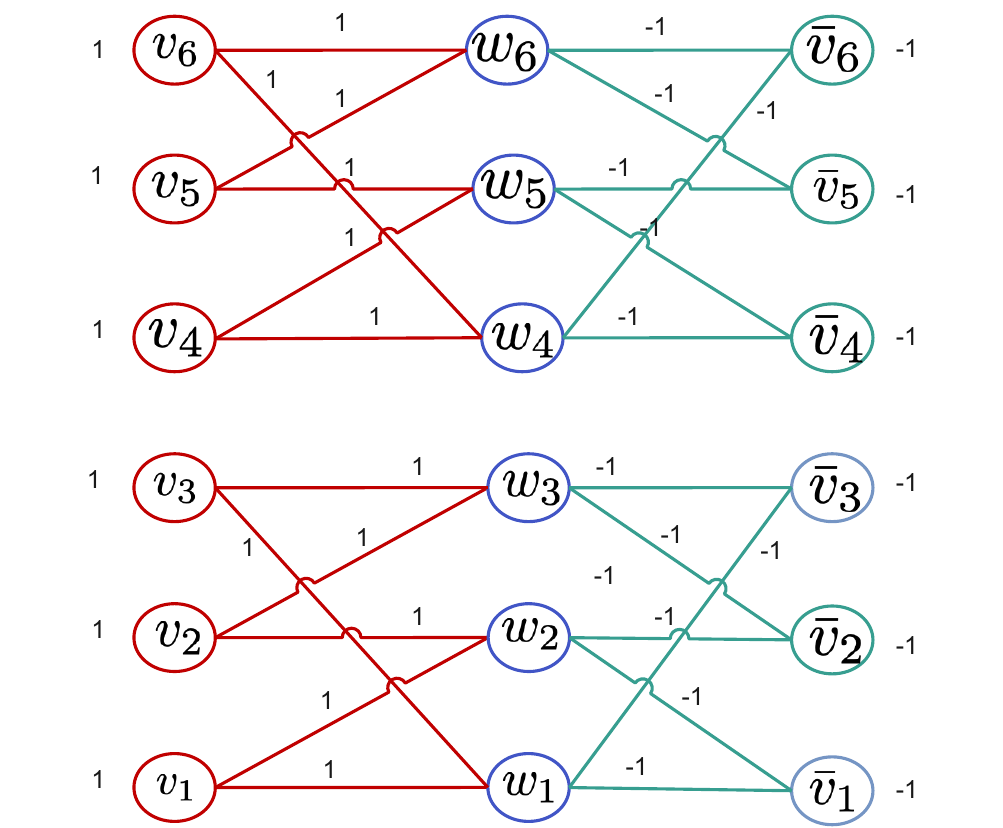}
\caption{Bipartite graph $G_\psi$.}\label{figbgpsi}
\end{subcaptionblock}%
\vspace*{2ex}
\caption{Bipartite graph representations of $\phi$ and $\psi$ of Proposition~\ref{undist}.}\label{figNNphipsi}
\vspace*{3ex}
\end{figure}

In Fig.~\ref{figNNphipsi} we illustrate the labelled bipartite graph representations $G_\phi$ (Fig.~\ref{figbgphi}) and $G_\psi$ (Fig.~\ref{figbgpsi}) of $\phi$ (\viz~$M_\phi$) and $\psi$ (\viz~$M_\psi$), respectively ($w_j$ are node variables, $v_i$ and $\bar{v}_i$ are node constraints).
%

Now, we show that the two graphs in Fig.~\ref{figNNphipsi} can not be distinguished by WL test, which can be proved by induction.
In fact, it is not difficult to see that we may end up with a colouring as illustrated in Fig.~\ref{figNNphipsi}: 
each constraint node $v_i$ ($1\leq i \leq 6$), labelled with $1$, is connected to $2$ variable nodes $w_i$ ($1\leq i \leq 6$) via  edges labelled $1$; and 
each constraint node $\bar{v}_i$ ($1\leq i \leq 6$), labelled with $-1$, is connected to $2$ variable nodes $w_i$ ($1\leq i \leq 6$) via  edges labelled $-1$.


    

\nd Consequently, one cannot distinguish the two graphs by checking the outputs of the 1-WL test. 
\end{proof}

\nd The above proposition tells us that there are two formulae $\phi$ and $\psi$ such that
$\Phi_{\mathtt{sat}}^{n,m}(\phi) \neq \Phi_{\mathtt{sat}}^{n,m}(\psi)$ (for some $n,m$), but $\phi \sim \psi$ and, thus, by Proposition~\ref{thm4.2chen23},  $F(\phi) = F(\psi)$, for all $F \in\calF_{\mathtt{GNN}}^{n,m}$.


{\bf GNNs with RNI for  $k$-CNFs.}
\nd We next show how one may easily get rid of the above mentioned shortcomings.  To do so, one may rely on so-called GNNs with \emph{Random Node Initialisation} (RNI), 
which allows to increasing the expressiveness of GNNs~\cite{Abboud20,Grohe21,Sato21}. Not surprisingly, this technique has also been used in fact in~\cite{Chen23} for MILP problems, which we illustrate shortly here.
Specifically,  given graph $G = \tuple{V,E,\vec{X}}$, the idea of RNI consists of extending each node's feature node $\vec{x}_v$ with an additional \emph{random value},  \eg~drawn \emph{uniformly} from the interval $[0,1]$. 
The intuition is that by appending an additional random feature, each node will have distinct features and the resulting bipartite graph can be distinguished by the 1-WL algorithm. 
%
%
In the following, we call such GNNs as \emph{GNNs with RNI}. 
and define $\calF_{\mathtt{RNIGNN}}^{n,m}$ as the set of all  mappings $\calF_{\mathtt{RNIGNN}}^{n,m} = \{F\colon \calL^k_{\mathtt{CNF}}(n,m) \to \realn \mid  F(\phi) = \hat{y}_{G_\phi}, \forall \phi \in  \calL^k_{\mathtt{CNF}}(n,m) \}$
that rely on GNNs with RNI. 
%
%
As a consequence, the output $\hat{y}_G$ (see Eq.~\ref{gnnout}) of GNN with RNI is no longer deterministic, but becomes a random variable.
Nevertheless, as explained in~\cite{Chen23}, GNNs with RNI still remain permutation invariant in the sense that  the outcome of the computation of  GNN with RNI does not depend on the specific representation of the input graph. Consequently, 
Proposition~\ref{invariance} (permutation invariance) and Proposition~\ref{invarianceequiv} (equivalence invariance) continue to hold for $\calF_{\mathtt{RNIGNN}}^{n,m}$ as well.

The following proposition states that, by adding random features to the weighted bipartite graph representation of a formula $\phi$ provides sufficient power to represent $\phi$'s satisfiability status. That is,
let $\satindicator^{n,m}:\calF_{\mathtt{RNIGNN}}^{n,m} \times \calL^k_{\mathtt{CNF}}(n,m) \to \{0,1\}$ be the binary classification function 
defined as 

\begin{equation} \label{satindicator}
\satindicator^{n,m}(F,\phi) = 
\begin{cases}
    1 & \text{if } F(\phi) > \frac{1}{2} \\
    0 & \text{otherwise } .
\end{cases}
\end{equation}

\nd Specifically, given a GNN with RNI $F \in \calF_{\mathtt{RNIGNN}}^{n,m}$ and a $k$-CNF formula $\phi \in \calL^k_{\mathtt{CNF}}(n,m)$, $F$ predicts $\phi$ to be satisfiable if the output $\hat{y}_{G_\phi}$ of $F$ applied to $\phi$ is above $\frac{1}{2}$. If $\satindicator^{n,m}(F,\phi) \neq \Phi_{\mathtt{sat}}^{n,m}(\phi))$ then we say that $F$ \emph{misclassified} $\phi$.
Then using~\cite[Theorem 5.1]{Chen23}, the following \emph{Universal SAT Approximation Property} can be shown.

\begin{proposition}\label{unisatprop}
Consider a finite dataset $\D$ of formulae in $\calL^k_{\mathtt{CNF}}(n,m)$. For all $\epsilon > 0$, there exists 
$F \in \calF_{\mathtt{RNIGNN}}^{n,m}$ such that for all $\phi \in \D$,
$Pr(\satindicator^{n,m}(F,\phi) \neq     \Phi_{\mathtt{sat}}^{n,m}(\phi) ) < \epsilon$.
\end{proposition}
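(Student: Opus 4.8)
The plan is to reduce Proposition~\ref{unisatprop} to the corresponding universal approximation result for MILP-graphs, namely \cite[Theorem 5.1]{Chen23}, by packaging the dataset $\D$ of $k$-CNF formulae as a dataset of labelled weighted bipartite graphs. Concretely, first I would apply the transformation $\phi \mapsto M_\phi \mapsto G_\phi$ described in Sections~\ref{sekcnfcmilp}--\ref{secmilpgnn} to every $\phi$ occurring in $\D$, obtaining a finite set $\D_G = \{\langle G_{\phi_1}, y_1\rangle, \ldots, \langle G_{\phi_s}, y_s\rangle\}$ of MILP-graphs, each of order at most $n+m$, together with its correct feasibility label $y_i = \Phi_{\mathtt{sat}}^{n,m}(\phi_i)$ (this labelling is consistent by Proposition~\ref{pphi}). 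Since $\D$ is finite, $\D_G$ is finite and its graphs have uniformly bounded order, so the hypotheses of the MILP universal approximation theorem are met.

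Next I would invoke \cite[Theorem 5.1]{Chen23}, which (in our notation) asserts: for any finite set of MILP-graphs and any $\epsilon > 0$, there is a GNN with RNI whose output, composed with the threshold at $\tfrac12$, agrees with the feasibility indicator on every graph in the set with probability at least $1-\epsilon$. Formally this yields an $F \in \calF_{\mathtt{RNIGNN}}^{n,m}$ — recall that an element of $\calF_{\mathtt{RNIGNN}}^{n,m}$ is precisely the composite ``$\phi \mapsto G_\phi \mapsto$ (RNI-GNN output)'' — such that for each $i$, $Pr\bigl(\satindicator^{n,m}(F,\phi_i) \neq y_i\bigr) < \epsilon$. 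Because $y_i = \Phi_{\mathtt{sat}}^{n,m}(\phi_i)$, this is exactly the desired conclusion $Pr\bigl(\satindicator^{n,m}(F,\phi) \neq \Phi_{\mathtt{sat}}^{n,m}(\phi)\bigr) < \epsilon$ for all $\phi \in \D$. One subtlety to spell out is the direction of quantifiers: \cite[Theorem 5.1]{Chen23} should already give a single $F$ that works simultaneously for all graphs in the finite collection (using that finite collections of distinct RNI-distinguishable graphs can be separated and then fitted); if the cited statement is phrased per-instance, a short argument rescaling $\epsilon$ to $\epsilon/s$ and taking a union bound, or concatenating the separating construction over the $s$ instances, closes the gap.

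The one genuine hypothesis that must be checked before \cite[Theorem 5.1]{Chen23} applies is that RNI actually suffices to separate the graphs $G_{\phi_i}$ — i.e.\ that after appending a uniform random feature to every node, the $1$-WL test distinguishes any two graphs in $\D_G$ with differing labels (indeed, with probability $1$, distinguishes all of them pairwise). This is where the earlier discussion of RNI is used: appending an independent uniform value to each node feature makes all node colours almost surely distinct at initialisation, so $1$-WL becomes almost surely discriminating on any fixed finite family of finite graphs; this is precisely the mechanism by which RNI boosts expressiveness in \cite{Abboud20,Grohe21,Sato21} and is exploited in \cite{Chen23}. Establishing (or more precisely, citing and instantiating) this separation property for our bipartite MILP-graphs $G_\phi$ is the main thing to get right; once it is in hand, everything else is bookkeeping about the reduction $\phi \leadsto G_\phi$ and the definition of $\satindicator^{n,m}$. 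I therefore expect the write-up to be short: set up $\D_G$, note label-consistency via Proposition~\ref{pphi}, remark that the $G_\phi$ are finite bipartite graphs of bounded order on which RNI makes $1$-WL almost surely separating, and then quote \cite[Theorem 5.1]{Chen23} verbatim, with a one-line union-bound remark if needed to pass from per-instance to uniform-over-$\D$ approximation.
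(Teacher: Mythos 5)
Your proposal matches the paper's own argument: the paper proves Proposition~\ref{unisatprop} exactly by passing each $\phi\in\D$ through the encoding $\phi\mapsto M_\phi\mapsto G_\phi$, using Proposition~\ref{pphi} to identify satisfiability with MILP feasibility, and then invoking~\cite[Theorem 5.1]{Chen23} for GNNs with RNI on the resulting finite set of bounded-order bipartite graphs. Your additional remarks on the per-instance versus uniform quantification and on RNI making 1-WL almost surely discriminating are sensible bookkeeping but do not change the route; the argument is essentially the same as the paper's.
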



\nd In other words, the proposition asserts that for any finite dataset $\D$ of formulas in $\calL^k_{\mathtt{CNF}}(n,m)$ and any $\epsilon > 0$, there exists a GNN with RNI such that the misclassification probability for any formula $\phi \in \D$ is less than $\epsilon$, which is consistent with the results shown in~\cite{Abboud20,Chen23,Grohe21,Sato21}.


{\bf GNNs for Unfoldable $k$-CNFs.}

\nd In this section we show that we may get an analogous result as Proposition~\ref{unisatprop} for GNNs without RNI, provided that we restrict the dataset to some specific formulae, called \emph{unfoldable} formulae.
Specifically, consider a formula $\phi \in \calL^k_{\mathtt{CNF}}(n,m)$
and let us considering the colouring of propositional letters and clauses of $\phi$ computed by $\mathtt{WL}_{\mathtt{kCNF}}(\phi)$.
We say that $\phi$ is \emph{foldable} if there are two clauses or two propositional letters in $\phi$ having the same colour. Otherwise, we call $\phi$ \emph{unfoldable}. For instance, both formulae $\phi$ and $\psi$ in Eqs.~\ref{phyind} and~\ref{psiind}, respectively, are foldable, while the formula $\phi$ in Example~\ref{expropmilp} is not. Note that our definition follows that of~\cite[Definition 4.1]{Chen23}, adapted to formulae.

Now, by using~\cite[Theorem 4.2.]{Chen23}, we may prove the analogue of Proposition~\ref{unisatprop} for AC-GNNs, provided that we restrict the dataset to unfoldable formulae. That is,

\begin{proposition} \label{unisatpropunfoldable}
Consider a finite dataset $\D$ of unfoldable formulae in $\calL^k_{\mathtt{CNF}}(n,m)$. 
For all $\epsilon > 0$, there exists  $F \in \calF_{\mathtt{GNN}}^{n,m}$ such that for all $\phi \in \D$
$Pr(\satindicator^{n,m}(F,\phi) \neq     \Phi_{\mathtt{sat}}^{n,m}(\phi) ) < \epsilon$.
\end{proposition}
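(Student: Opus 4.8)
The plan is to reduce the claim to a universal approximation statement for GNNs over the finite family of MILP-graphs $\mathcal{G}_{\D} = \{G_\phi \mid \phi \in \D\}$ and then invoke the version of \cite[Theorem 4.2]{Chen23} for unfoldable instances. Since a GNN without RNI computes a deterministic function, the quantity $Pr(\satindicator^{n,m}(F,\phi) \neq \Phi_{\mathtt{sat}}^{n,m}(\phi))$ is either $0$ or $1$; hence it is enough to produce a single $F \in \calF_{\mathtt{GNN}}^{n,m}$ with $\satindicator^{n,m}(F,\phi) = \Phi_{\mathtt{sat}}^{n,m}(\phi)$ for every $\phi \in \D$, after which the conclusion holds for all $\epsilon > 0$ with that same $F$.

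First I would fix the target. Define $h \colon \mathcal{G}_{\D} \to \{0,1\}$ by $h(G_\phi) = \Phi_{\mathtt{sat}}^{n,m}(\phi)$. The point that needs care is that $h$ is well defined on the equivalence that every member of $\calF_{\mathtt{GNN}}^{n,m}$ is forced to identify, namely the relation $\sim$ of $1$-WL indistinguishability (Proposition~\ref{thm4.2chen23}). So suppose $\phi,\psi \in \D$ with $\phi \sim \psi$. Since $\phi$ is unfoldable, $\mathtt{WL}_{\mathtt{kCNF}}(\phi)$ assigns pairwise distinct colours to the nodes of $G_\phi$; as $\psi$ has the same colour multiset, the same holds for $G_\psi$, and two graphs sharing a stable colouring that is injective on nodes must be isomorphic via a label- and edge-weight-preserving bijection, i.e. $G_\phi \cong G_\psi$. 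Such an isomorphism exhibits $M_\psi$ as a permutation of the constraints and variables of $M_\phi$, and feasibility of a MILP is invariant under such permutations, so by Proposition~\ref{pphi} we get $\Phi_{\mathtt{sat}}^{n,m}(\phi) = \Phi_{\mathtt{sat}}^{n,m}(\psi)$. Hence $h$ is well defined and constant on each $\sim$-class over $\D$.

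Next I would apply \cite[Theorem 4.2]{Chen23}. Because every $\phi \in \D$ is unfoldable, the graphs in $\mathcal{G}_{\D}$ fall under that theorem, which states that GNNs without RNI universally approximate any target function on a set of unfoldable MILP instances; applied to the finite set $\D$ with target $\phi \mapsto \Phi_{\mathtt{sat}}^{n,m}(\phi)$ and tolerance $1/4$, it yields $F \in \calF_{\mathtt{GNN}}^{n,m}$ with $|F(\phi) - \Phi_{\mathtt{sat}}^{n,m}(\phi)| < 1/4$ for all $\phi \in \D$. Since $\Phi_{\mathtt{sat}}^{n,m}(\phi) \in \{0,1\}$, this gives $F(\phi) > \tfrac{1}{2}$ iff $\Phi_{\mathtt{sat}}^{n,m}(\phi) = 1$, that is $\satindicator^{n,m}(F,\phi) = \Phi_{\mathtt{sat}}^{n,m}(\phi)$ for every $\phi \in \D$. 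As $F$ is deterministic, $Pr(\satindicator^{n,m}(F,\phi) \neq \Phi_{\mathtt{sat}}^{n,m}(\phi)) = 0 < \epsilon$ for all $\epsilon > 0$, which is the claim.

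The main obstacle is the well-definedness step: one must check carefully that ``unfoldable and $1$-WL indistinguishable'' really upgrades to ``isomorphic MILP-graphs'', and that graph isomorphism of the bipartite encodings is no weaker than a constraint/variable permutation of the underlying MILPs (so that Proposition~\ref{pphi} and permutation invariance of feasibility apply); a minor padding subtlety around variable nodes for propositional letters that do not occur in a given formula needs a line of attention. Everything else is bookkeeping or is delegated to the cited MILP approximation theorem, whose proof (a GNN simulating the WL refinement, followed by a read-out that is injective on the resulting finite set of colour histograms) is the genuine technical content and is assumed here.
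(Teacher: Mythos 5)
Your proposal is correct and follows essentially the same route as the paper, which simply invokes \cite[Theorem 4.2]{Chen23} for unfoldable MILP instances (after the $k$-CNF-to-MILP-graph encoding) and notes that a deterministic GNN makes the probabilistic bound equivalent to exact classification on the finite dataset. Your additional well-definedness argument (unfoldable plus 1-WL-indistinguishable implies isomorphic encodings, hence equal satisfiability) is sound but is already part of the content of the cited theorem rather than something the paper re-proves.
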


\nd It essentially states that for unfoldable formulae we do not need the extra expressive power of RNI. This concludes this section.

\section{Evaluation} \label{eval}


\nd The $k$-SAT problem exhibits the well-known \emph{easy-hard-easy} phase transition~\cite{Crawford96,Mitchell92}, based on the \emph{clauses-to-variables ratio} $r = \frac{m}{n}$, where $m$ and $n$ are the number of clauses and variables, respectively. The hardest instances occur near $m \approx 4.5 \cdot n = h$. For $r \leq h$, formulae are under-constrained, with many models and short solving times. For $r \geq h$, they are over-constrained, typically unsatisfiable, and also quickly resolved. The hardest cases lie near $r = h$, where formulae have few or a unique solution, making them computationally demanding. As such,  
current GNN SAT solvers often focuses on hard problems, as we do here as well.

We employ here two types of SAT dataset generators. The first one is  G4SATBench \cite{Li24}, which, generates, among others, $3$-SAT datasets for GNN-based SAT solvers focussing on hard problems. G4SATBench produces sat/unsat balanced random $3$-SAT formulae at the phase-transition region (hard region), $h' = 4.258n+58.26n^{2/3}$~\cite{Crawford96}. The other one is our own  one.
As SAT is NP-complete, building a labelled-dataset can be very time-consuming. A peculiarity of our dataset generator is that, given  $k$, maximal number of variables $n$, maximal clause length $m$ and dataset size $s$, we generate a labelled dataset 
of $s$ random $k$-CNF formulae not exceeding the given parameters.
Then, during the training phase, we may specify, according to some parameters, which formulae to select (always balanced sat/unsat), among which we have \eg~the training, validation and test sizes, the fraction $f_p$ of hard formulae in the area $[h' - f_l \cdot h', h' + f_r \cdot h']$, where $f_l, f_r \in [0,1]$ are two parameters allowing to define the extend of the hard area.
We run our experiments on eight $3$-SAT datasets, 4 generated by G4SATBench and 4 generated by \loggnn, of sizes 2K and 20K,  each split into \emph{easy} (number of variables $n \in [10,40]$) and \emph{medium} ($n \in [40,200]$), in \loggnn~we have $f_l = f_r = 0.01$ and $f_p=1.0$ (all formulae are in the hard area). The validation and test size is $10\%$ of the  dataset size. The results are reported in Table~\ref{tabresults2}.
%
%
%
On all eight datasets we run our \loggnn~solver\footnote{The source code of \loggnn~and the $k$-SAT generator can be found here: \url{https://www.umbertostraccia.it/cs/ftp/milp-sat-gnn.arxive.zip}} and the NeuroSAT solver~\cite{Selsam19}\footnote{In~\cite{Li24} reported among the best one for $3$-SAT.} as comparison, where the latter has been executed as indicated and implemented in the G4SATBench source code.\footnote{Among others, with 32 hidden layers, 128 embedding size, learning rate $10^{-4}$} For \loggnn~we made some runs with varying hyperparameters. Given the limited resources we had access to, the number of epochs has been fixed to $150$. Concerning RNI, we considered the following percentage of nodes that have a random feature: $0.0\%$  (no RNI), $50\%$ (half of the nodes have a random feature) and $100\%$ (all nodes have a random feature). Concerning the embedding size $d$ and the number of layers $l$, we considered  $d \in \{32,64,128\}$ and $l \in \{2, 16,32\}$, respectively.  The batch size is fixed to 64 and the learning rate is $10^{-4}$. We considered \emph{Mean Squared-Error} (MSE) and  \emph{Binary Cross Entropy} (BCE) loss functions (see, \eg~\cite{Hamilton20,Wu22}), but our results showed lower effectiveness with MSE.
The only a notable exception for MSE is however the case of (d-l)=128-32 with no RNI of the G4Satbench, case 20K-easy dataset, for which we found an accuracy of 0.84.





\begin{table}[t]
\caption{Experimental results (accuracy, where loss function is BCE).} \label{tabresults2}
\begin{center}
{\tiny

\begin{tabularx}{\columnwidth}{p{3mm}p{5mm}p{2mm}cccccccc}
    \cmidrule{1-11}
     & & & \multicolumn{8}{c}{Datasets - Loss BCE} \\ 
     & & & \multicolumn{4}{c}{G4SATBench} & \multicolumn{4}{c}{\loggnn} \\ 
     \cmidrule(lr){4-7} \cmidrule(lr){8-11} 
    & & & \multicolumn{2}{c}{2K} & \multicolumn{2}{c}{20K} & 
        \multicolumn{2}{c}{2K} & \multicolumn{2}{c}{20K} \\
    \cmidrule(lr){4-5} \cmidrule(lr){6-7} \cmidrule(lr){8-9} \cmidrule(lr){10-11}  
    Solver  & d-l & RF    & easy  & medium    & easy  & medium    
                & easy  & medium    & easy  & medium    \\
    \cmidrule{2-11}
                &           &           &       &         \\
    Neuro\-SAT & - & -   &  .80   &    .75  &  .89     &   .79     
                &     .80      &     .77      &   .89    &   .79      \\

    
    \cmidrule{2-11}
    \multirow{3}{*}{\parbox{3mm}{\loggnn}} 
    & 32-2 &  0   & .67 & .73 &  .68 &  .71         
        &   .71       &    .73       &   .68     &  .72 \\
    & 32-16 &     & .70 & .68 & .78  &    .75       
        &   .73       &   {\bf .75}        &   .78     &  {\bf .75} \\
    & 32-32 &     & .74 & .74 & {\bf .79} &     .75      
        &   .70       &    .73       &   .78     &  .74 \\
    \cmidrule{2-2}
    & 64-2 &     & .70 & {\bf .75} & .67 &   .71        
        &  .70       &      .70     &   .68     &  .72 \\
    & 64-16 &     & .75 & .66 & .77 & .75          
        &  .70  &   .73       &   .77     &  .74 \\
    & 64-32 &     & {\bf .79} & .64 & .76 &    .75       
        &  .69       & .71          &   .77    &  {\bf .75} \\
    \cmidrule{2-2}
    & 128-2 &     & .67 & .71 & .66 &   .71        
        &  .70        &   .70        &   .69     & .71  \\
    & 128-16 &     & .72 & .69 & .75 &  .75         
        &  .74        &      .70     &   .75     &  .72 \\
    & 128-32 &     & .73  & .67 & .75 ({\bf .84}) &  .75         
        &  .71        &       .70    &   .73     &  .74 \\
    \cmidrule{2-11}
    & 32-2 & .5    & .71 & .73 & .67 &  .71         
        &   .69     &     .70      &   .67     & .71 \\
    & 32-16 &     & .73 & .69 & {\bf .79} &  .74         
        &  .69        &      {\bf .75}     &  {\bf .79}      &  {\bf .75}  \\
    & 32-32 &     & .76 & .66 & {\bf.79} &  .75         
        &   .72       &   .73        &   .77     &  .74 \\
    \cmidrule{2-2}
     & 64-2 &     &  .61  & .73          &  .68      &  .71         
            &  .73       &   .67        &   .69     &  .71 \\
     & 64-16 &     &  .73  & .70          &   .77     &    .75       
            &   .72      &    .70       &   .77    & {\bf .75} \\
     & 64-32 &     &    .71     &    .66       &   .77     & {\bf .76}           
            &  .70       &     .72      &    .76   &  .74\\
     \cmidrule{2-2}                
        & 128-2 &     &  .70        &    .69       &   .67     &   .71        
            & .68        &      .71     &  .68    &  .71 \\
     & 128-16 &     &    .77      &     .69      &   .76     &   .74        
            & .70        &      .72     &   .75    &  .73 \\
     & 128-32 &     &   .74       &    .75       &   .76     &   .75        
            & .72        &      .74     &    .75    &  .73 \\
   \cmidrule{2-11}                
       & 32-2 & 1.0    & .69 & .72 & .68 &   .71        
        &  .72      &    .70       &   .68     &  .73 \\
    & 32-16 &     & .75 & .65 & {\bf .79} &   .74        
        &  .73        &     .64      &   .78     &  .74 \\
    & 32-32 &     & .77 & .73 & .78 &   .74        
        &   .74       &     .72      &    .78    &  .73 \\
    \cmidrule{2-2}
     & 64-2 &     &    .68      &    .73       &  .68      &    .71       
            &  .69         &   .73        &  .68      &  .72 \\
     & 64-16 &     &   .76       &  .73         &   .76     &    .75       
            &  .73       &   .67        &  .78     & .74 \\
     & 64-32 &    &     .71     &   .71        &   .77     &  .74           
            &  .68       &     .73      &    .76    &  {\bf .75} \\
     \cmidrule{2-2}                
        & 128-2 &    &    .70      &   .74        &  .68      &   .71        
                & .68        &    .69       &  .67    &  .70 \\
     & 128-16 &     &    .76      &   {\bf .75}        &   .75     &   .74        
                &  .70       &   .73        &   .75    &  {\bf .75} \\
     & 128-32 &     &     .70     &    .74       &   .76     &     .75      
                &  {\bf.75}       &      .71     &  .76      &  {\bf .75} \\
    \cmidrule{1-11}
\end{tabularx}
} 
\end{center}
\end{table}


As one may see, our preliminary effectiveness results  of \loggnn, despite its current simple architecture, are promising and somewhat comparable to the one of NeuroSat\footnote{Not surprisingly, \eg~for the case 128-32 on the G4Satbench 2000 easy dataset, the average learning time per epoch of NeuroSat (70sec) is around $4.5$ times longer than the one for \loggnn (15sec).} with a more notable difference for the cases 20K-easy. Concerning \loggnn, the case (d-l) =  32-16 with $50\%$ RNI has more winning conditions. For the considered datasets, having RNI indeed has slight more winning columns than not having RNI, compliant with the results reported \eg~\cite{Abboud20,Chen23}.

 \begin{tabular}{l}
 \end{tabular}


\section{Conclusions} \label{concl}

{\bf Summary.} This paper introduced MILP-SAT-GNN, a novel method that enables Graph Neural Networks (GNNs) to solve Boolean SAT (Satisfiability) problems by leveraging GNNs used for Mixed Integer Linear Programming (MILP) via a transformation of $k$-CNF formulae into MILPs, which are then encoded as weighted bipartite graphs and processed by a GNN. We contributed with some theoretical properties of the method (invariance, approximate soundness and completeness) and performed also an experimental validation showing promising results and comparable to best performing solver, despite the simple GNN architecture adopted.

{\bf Future work.} One line, obviously, relates to increase the complexity of GNN architecture with the objective to improve the effectiveness. Also, we are going to address the satisfiability assignment problem within \loggnn~(find a model if it exists). 
Moreover, we would like also to investigate how to train our system on a dataset covering the whole spectrum of formulae with varying sizes $k$, number of variables and number of clauses, to generalise the system to any input.
Another line, by noting that Proposition~\ref{pphi}, with an appropriate and more involved transformation into MILP,  may  be extended to some \emph{many-valued}~\cite{Haehnle94a} and paraconsistent~\cite{Abe15} logics, relates to the development of GNNs for these logics as well. Additionally, using MILP allows us also to combine logical constraints with  numerical constraints, such as  ``the price is $\geq 15k$". We also would like to investigate whether we can directly encode also some probabilistic logic~\cite{LukasiewiczT98,deRaedt07,Martires24}  and used in a neuro-symbolic approach such as~\cite{Manhaeve18} (see, \eg~\cite{Marra24} for an overview) in which, rather that having two separated systems (so-called, system1-system2 model), we aim at encoding both as a neural network. 


\clearpage



\section*{acknowledgments}
\nd This research was partially supported by the FAIR (Future Artificial Intelligence Research) project funded by the NextGenerationEU program within the PNRR-PE-AI scheme (M4C2, investment 1.3, line on Artificial Intelligence). This work has also been partially supported by the H2020 STARWARS Project (GA No. 101086252), a type of action HORIZON TMA MSCA Staff Exchanges.





\end{document}